\def\eqref#1{equation~\ref{#1}}
\def\1{\bm{1}}
\def\mA{{\bm{A}}}
\def\mI{{\bm{I}}}
\def\mQ{{\bm{Q}}}
\def\mS{{\bm{S}}}
\DeclareMathAlphabet{\mathsfit}{\encodingdefault}{\sfdefault}{m}{sl}
\SetMathAlphabet{\mathsfit}{bold}{\encodingdefault}{\sfdefault}{bx}{n}
\theoremstyle{plain}
\newtheorem{theorem}{Theorem}[section]
\theoremstyle{definition}
\newtheorem{assumption}[theorem]{Assumption}
\theoremstyle{remark}
\icmltitlerunning{Fed-CBS: A Heterogeneity-Aware Client Sampling Mechanism for Federated Learning via Class-Imbalance Reduction}
\begin{document}

\twocolumn[
\icmltitle{Fed-CBS: A Heterogeneity-Aware Client Sampling Mechanism for Federated Learning via Class-Imbalance Reduction}




\begin{icmlauthorlist}
\icmlauthor{Jianyi Zhang}{duke}
\icmlauthor{Ang Li}{umd}
\icmlauthor{Minxue Tang}{duke}
\icmlauthor{Jingwei Sun}{duke}
\icmlauthor{Xiang Chen}{gmu}
\icmlauthor{Fan Zhang}{yale}
\icmlauthor{Changyou Chen}{ub}
\icmlauthor{Yiran Chen}{duke}
\icmlauthor{Hai Li}{duke}
\end{icmlauthorlist}

\icmlaffiliation{duke}{Duke University}
\icmlaffiliation{umd}{University of Maryland, College Park}
\icmlaffiliation{gmu}{George Mason University}
\icmlaffiliation{yale}{Yale University}
\icmlaffiliation{ub}{The State University of New York at Buffalo}

\icmlcorrespondingauthor{Jianyi Zhang}{jianyi.zhang@duke.edu}

\icmlkeywords{Machine Learning, ICML}

\vskip 0.3in
]



\printAffiliationsAndNotice{}  


\begin{abstract}
Due to the often limited communication bandwidth of edge devices, most existing federated learning (FL) methods randomly select only a subset of devices to participate in training at each communication round. 
Compared with engaging all the available clients, such a random-selection mechanism could lead to significant performance degradation on non-IID (independent and identically distributed) data.
In this paper, we present our key observation that the essential reason resulting in such performance degradation is the class-imbalance of the grouped data from randomly selected clients.  
Based on this observation, we design an efficient heterogeneity-aware client sampling mechanism, namely, Federated Class-balanced Sampling (Fed-CBS), which can effectively reduce class-imbalance of the grouped dataset from the intentionally selected clients.  We first propose a measure of class-imbalance which can be derived in a privacy-preserving way. 
Based on this measure, we design a computation-efficient client sampling strategy such that the actively selected clients will generate a more class-balanced grouped dataset with theoretical guarantees. 
Experimental results show that Fed-CBS outperforms the status quo approaches in terms of test accuracy and the rate of convergence while achieving comparable or even better performance than the ideal setting where all the available clients participate in the FL training.
\end{abstract}

\section{Introduction}
\label{sec:intro}
With the booming of IoT devices, a considerable amount of data is generated at the network edge, providing valuable resources for learning insightful information and enabling intelligent applications such as self-driving, video analytics, anomaly detection, etc. The traditional wisdom is to train machine learning models by collecting data from devices and performing centralized training. Data migration usually raises serious privacy concerns.  Federated learning (FL) \cite{mcmahan2017communication} is a promising technique to mitigate such privacy concerns, enabling a large number of clients to learn a shared model collaboratively, and the learning process is orchestrated by a central server. In particular, the participating clients first download a global model from the central server and then compute local model updates using their local data. The clients then transmit the local updates to the server, where the local updates are aggregated and then the global model is updated accordingly.

\begin{figure*}[h]
\centering
\begin{subfigure}{0.24\textwidth}
  \centering
  \includegraphics[width=0.9\linewidth]{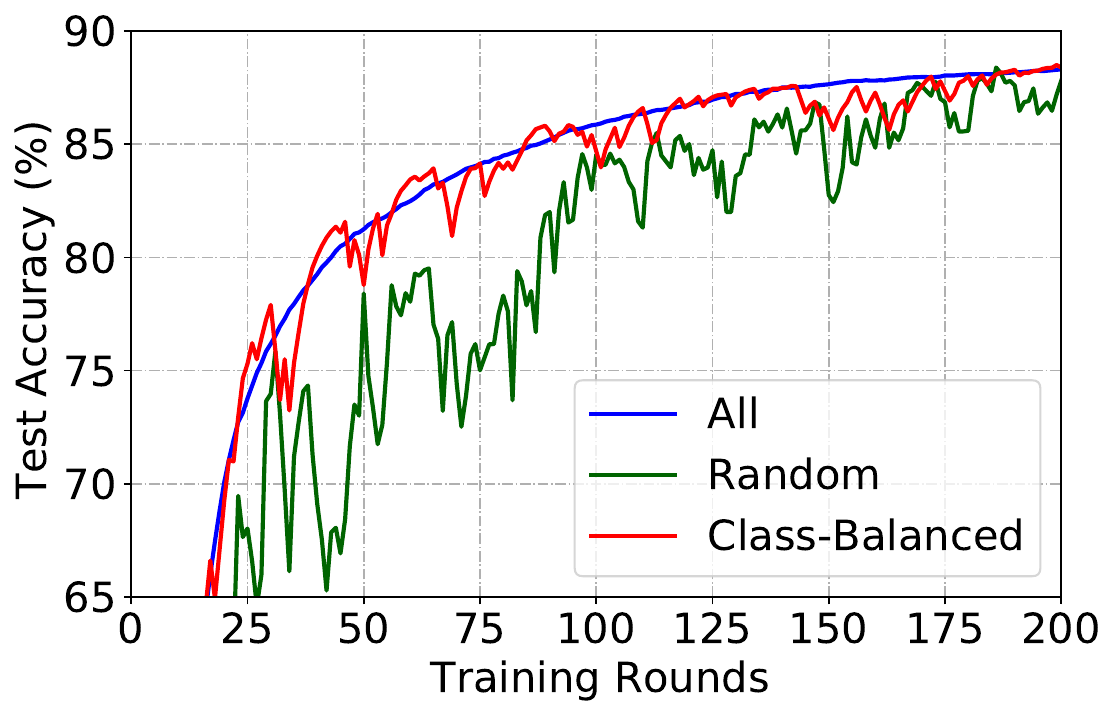}
  \caption{Global Balanced \& One-class}
  \label{fig:fig1-sub-first}
\end{subfigure}
\begin{subfigure}{.25\textwidth}
  \centering
  \includegraphics[width=0.9\linewidth]{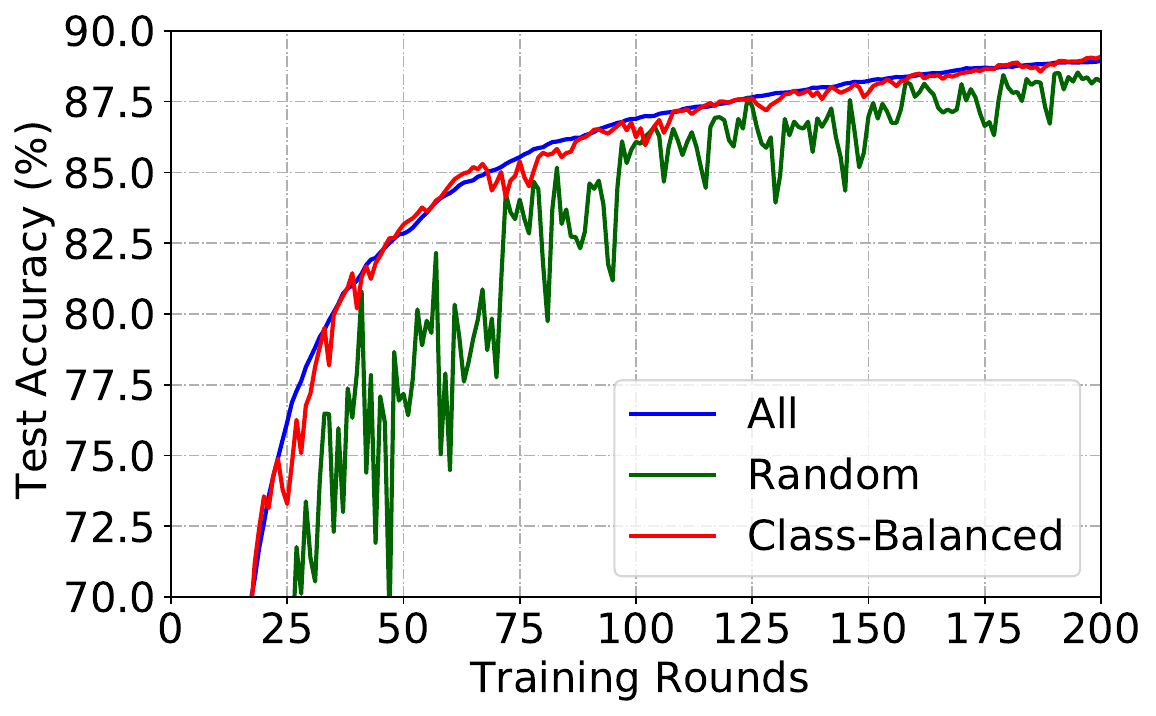}  
  \caption{Global Balanced \& Two-class}
  \label{fig:fig1-sub-second}
\end{subfigure}
\begin{subfigure}{.24\textwidth}
  \centering
  \includegraphics[width=0.9\linewidth]{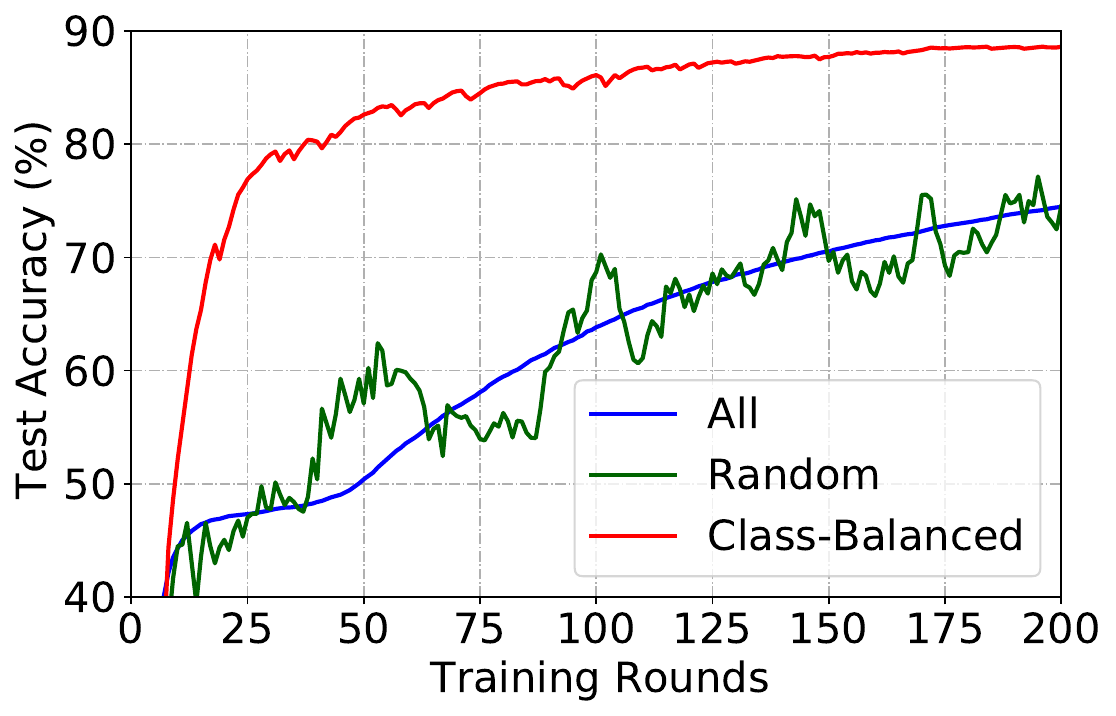}
  \caption{ Global Imbalanced \& One-class}
  \label{fig:fig1-sub-third}
\end{subfigure}
\begin{subfigure}{.24\textwidth}
  \centering
  \includegraphics[width=0.9\linewidth]{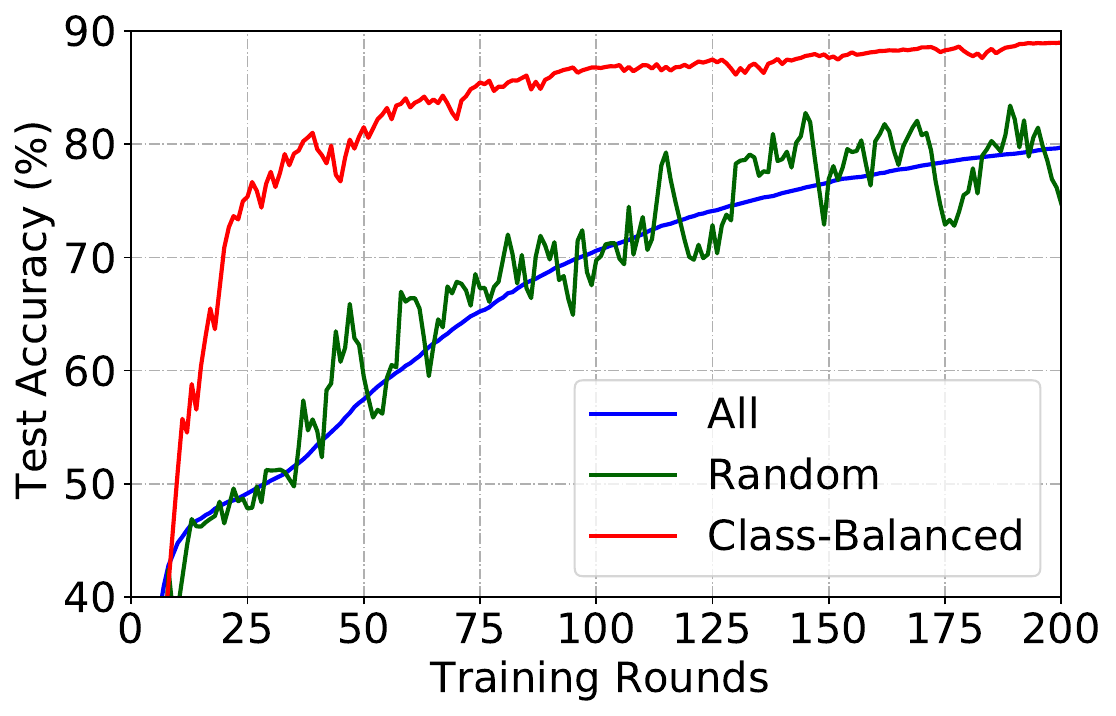}
  \caption{Global Imbalanced \& Two-class}
  \label{fig:fig1-sub-fourth}
\end{subfigure}
\caption{Three different FL client selection strategies on MNIST. \textit{All} means engaging all the 100 clients in training. \textit{Random} means randomly selecting 10 clients. \textit{Class Balanced} means that we keep the class-balance by intentionally selecting 10 clients. In Figure \ref{fig:fig1-sub-first} and \ref{fig:fig1-sub-second}, the global dataset of all the 100 clients' training data is class-balanced. In Figure \ref{fig:fig1-sub-third} and \ref{fig:fig1-sub-fourth}, the global dataset is class-imbalanced.
Each client has only one class of data in (a) and (c) and each client has two classes of data in (b) and (d). The results show significant performance degradation with imbalanced data from random client selection. It is worth noting that when the global dataset is class-imbalanced, selecting all the clients leads to worse performance compared with the \textit{Class Balanced} strategy, which suggests the importance of keeping class-balance for client selection.
}
\label{fig1}
\vspace{-10pt}
\end{figure*}

In practice, due to limited communication and computing capabilities, one usually can not engage all the available clients in FL training to fully utilize all the local data. Therefore, most FL methods only randomly select a subset of the available clients to participate in the training in each communication round. However, in practice, the data held by different clients are often typically non-IID (independent and identically distributed) due to various user preferences and usage patterns. This leads to a serious problem that the random client selection strategy often fails to learn a global model that can generalize well for most of the participating clients under non-IID settings~\cite{goetz2019active,Cho2020ClientSI,Nishio2019ClientSF,yang2020federated}. 

Several heuristic client selection mechanisms have been proposed to tackle the non-IID challenge. For example, in the method of ~\cite{goetz2019active}, the clients with larger local loss will have a higher  probability to be selected to participate in the training. Power-of-Choice~\cite{Cho2020ClientSI} selects several clients with the largest loss from a randomly sampled subset of all the available clients. However, selecting clients with a larger local loss may not guarantee that the final model can have a smaller global loss. Another limitation of previous research on client selection is the missing comparison between their strategy and the ideal case, where all the available clients participate in the training. In general, existing works not only miss a vital criterion that can measure the performance of their methods, but also fail to investigate the essential reason why random client selection can lead to performance degradation on non-IID data compared with fully engaging all the available clients. 

In this paper, we focus on image classification tasks. First, we demonstrate our key observation for the essential reason why random client selection results in performance degradation on non-IID data, which is the \textit{class-imbalance} of the grouped dataset from randomly selected clients. Based on our observation, we design an efficient heterogeneity-aware client sampling mechanism, \textit{i.e.}, Federated Class-Balanced Sampling (Fed-CBS), which effectively reduces the class-imbalance in FL. Fed-CBS is orthogonal to numerous existing techniques to improve the performance of  FL~\cite{li2018federated,wang2020addressing,karimireddy2019scaffold,chen2020fedmax,2020arXiv200300295R,Hao_2021_CVPR,yang2021flop} on non-IID data, meaning Fed-CBS can be integrated with these methods to improve their performance further. 
Our major contributions are summarized as follows:
\vspace{-3mm}
\begin{itemize}
    \item We reveal that  the class-imbalance is the fundamental reason why random client selection leads to performance degradation on non-IID data in Section \ref{Sec:Preliminary}. \vspace{-2mm} 
    \item To effectively reduce the class-imbalance, we design an efficient heterogeneity-aware client sampling mechanism, \textit{i.e.}, Fed-CBS, based on our proposed class-imbalance metric in Section \ref{Sec:Method}. We provide theoretical analysis on the convergence of Fed-CBS  in Section \ref{Sec:Convergence}, as well as the analysis of the NP-hardness of this problem. \vspace{-2mm}
    \item We empirically evaluate Fed-CBS on FL benchmark (non-IID datasets) in Section \ref{Sec:experiments}. The results demonstrate that Fed-CBS can improve the accuracy of FL models on CIFAR-10 by $2\%\sim 7\%$ and accelerate the convergence time by $1.3\times\sim 2.8\times$, compared with the state-of-the-art method ~\cite{yang2020federated} that also aims to  reduce class-imbalance via client selection. Furthermore, our Fed-CBS achieves comparable or even better performance than the ideal setting where all the available devices are involved in the training. 
\end{itemize}

\section{Preliminary and Related Work}\label{Sec:Preliminary}
We first clarify three definitions. The \textbf{local dataset} is the client's own locally-stored dataset, which is inaccessible to other clients and the server. Due to the heterogeneity of local data distribution, the phenomenon of class-imbalance frequently happens in most of the local datasets. The \textbf{global dataset} is the union of all the available client local datasets. It can be class-balanced or class-imbalanced, but it is often imbalanced. The \textbf{grouped dataset} is the union of several clients' local datasets which have been selected to participate in training for one communication round. It follows that the grouped dataset is a subset of the global dataset. 

\subsection{Pitfall of Class-Imbalance in Client Selection}\label{CImotivation}

 Some recent works~\cite{yang2020federated,wang2020addressing,Duan2019AstraeaSF} have identified the issue of class-imbalance in the grouped dataset by random selection under non-IID settings. Since class-imbalance degrades the classification accuracy
on minority classes \cite{huang2016learning} and leads to low training efficiency, we are motivated to verify whether the class-imbalance of the randomly-selected grouped dataset is the essential reason accounting for the performance degradation. 

We conduct some experiments on MNIST to verify our proposition\footnote{Detailed experiment settings are listed in the Appendix (Section \ref{setting2.1})}. As shown in Figure \ref{fig:fig1-sub-first} and Figure \ref{fig:fig1-sub-second}, the random selection mechanism shows the worst performance when the global label distribution is class-balanced. If we keep the grouped dataset class-balanced by manually selecting the clients based on their local label distribution, we can obtain accuracy comparable to the case of fully engaging all the clients in training.  



Another natural corollary is that when the global dataset is inherently class-imbalanced, engaging all clients in training may lead to worse performance than manually keeping the grouped dataset class-balanced. The results in Figure \ref{fig:fig1-sub-third} and Figure \ref{fig:fig1-sub-fourth} prove our hypothesis and verify the importance of class-imbalance reduction. This also indicates that only keeping diversity in the data and fairness for clients is not enough, which was missed in the previous literature \cite{balakrishnan2021,efficiencyBCSS,yang2020federated,wang2020addressing,shen2022agnostic,wang2021addressing}. More experimental results on larger datasets will be provided to verify the importance of class-imbalance reduction (Section \ref{Sec:experiments}).

\begin{figure*}[!htbp]
    \centering
    \includegraphics[width=\linewidth]{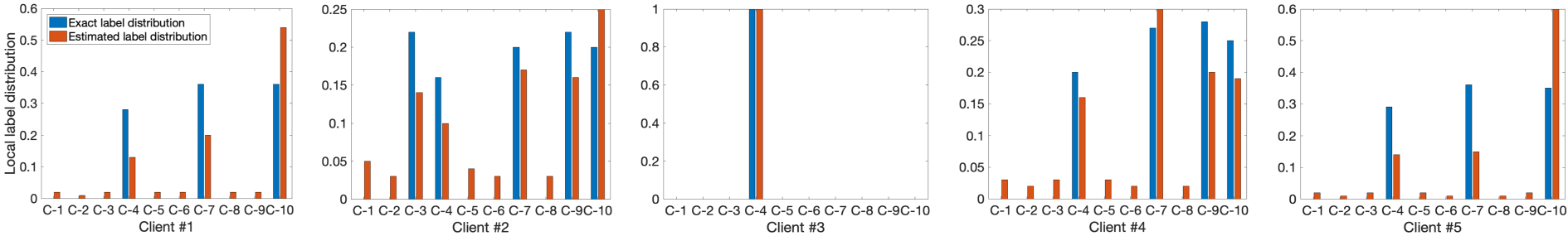}\\
    \vspace{-10pt}
    \caption{The exact local label distributions and the estimated ones of the first 5 clients in the experiment of \cite{yang2020federated}. Label distribution quantifies the ratio between the number of data from 10 classes (C-1, C-2, ..., C-10) in each client's local dataset.}
    \label{RealEstimated}
\end{figure*}

\begin{figure*}[!htbp]
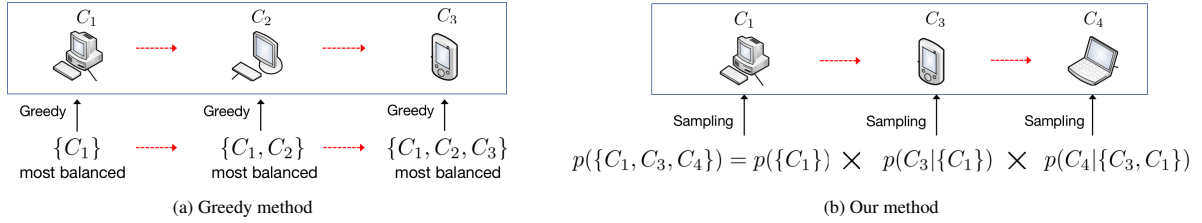

\begin{subfigure}{.49\textwidth}
  \centering
  \includegraphics[width=\linewidth]{Fig/greedy.pdf}
  \caption{Greedy method}
  \label{fig:greedy}
\end{subfigure}
\begin{subfigure}{.49\textwidth}
  \centering
  \includegraphics[width=0.99\linewidth]{Fig/ours.pdf}  
  \caption{Our method}
  \label{fig:our method}
\end{subfigure}\\
\vspace{-10pt}
\caption{An example demonstrating the weakness of greedy method to deal with class imbalance. Supposing we work on a 6-class classification task and aim to select 3 clients from 4 available clients $C_1, C_2, C_3, C_4$. Each of them has 30 images. The compositions of their local datasets are $[5,5,5,5,5,5]$, $[6,6,6,6,6,0]$, $[0,0,0,10,10,10]$ and  $[10,10,10,0,0,0]$ respectively. The greedy method in \cite{yang2020federated} is deterministic. It can only derive one result $\{C_1, C_2, C_3\}$ instead of the optimal solution $\{C_1, C_3, C_4\}$ (see the text description). But our method is based on probability modeling, which directly models the distribution of the optimal solution $\{C_1, C_3, C_4\}$. Thus when sampling from it, the optimal solution can be returned with high probability.}
\label{fig:greedyandours}
\vspace{-5pt}
\end{figure*}

\subsection{Related Work}\label{sec:relatedwork}
Some effort has been made to improve client selection for FL in previous literature. \citep{Cho2020ClientSI,goetz2019active} select clients with larger local loss, but this cannot guarantee that the final global model has a smaller global loss. Focusing on the diversity in client selection, the authors of \cite{balakrishnan2021} select clients by maximizing a submodular facility location function defined over gradient space. A fairness-guaranteed algorithm termed RBCS-F was proposed in \cite{efficiencyBCSS}, which models the fairness-guaranteed client selection as a Lyapunov optimization problem. Although diversity and fairness are important, the experimental results in Section \ref{CImotivation} demonstrate that they are not enough for client selection if the class-imbalance issue is not considered. The authors in \cite{ribero2020communication} model the progression of model weights by an Ornstein-Uhlenbeck process and design a
sampling strategy for selecting clients with significant weight
updates. However, the work only considers the identical data distribution setting.
Following the existing works~\cite{goetz2019active,Cho2020ClientSI}, we only focus on the data heterogeneity caused by non-IID data across clients. Additionally, we included a comparison of our method with other clustered-based client sampling algorithms in the appendix.


To the best of our knowledge, \cite{Duan2019AstraeaSF} and  \cite{yang2020federated} are the first two attempts to improve client selection by reducing class-imbalance. An extra virtual component called a mediator is introduced in \textit{Astraea} of \cite{Duan2019AstraeaSF}, which has access to the local label distributions of the clients. With these distributions, \textit{Astraea} will conduct client selection in a greedy way. The method of \cite{yang2020federated} first estimates the local label distribution of each client based on the gradient of model parameters and adopts the same greedy way to select clients as \textit{Astraea}. Since directly knowing the exact value of local label distributions of clients in \textit{Astraea} will cause severe concerns on privacy leakage, we consider the method in \cite{yang2020federated} as the state-of-the-art method aiming to improve client selection through class-imbalance reduction. 

However, the solution presented by \cite{yang2020federated} has several limitations. First, their method requires a class-balanced auxiliary dataset that consists of all classes of data at the server. However, that is not always available in some large-scale FL systems since it requires the server to collect raw data from clients, which breaches privacy. Second, their estimations of the clients' local label distribution are not accurate as shown in 
Figure \ref{RealEstimated}. Theorem 1 in \cite{yang2020federated} supports their estimations, but it cannot be generalized to multi-class classification tasks since it has only been proved in the original paper \cite{anand1993improved} for two-class classification problems. Finally, the performance of greedily conducting the client selection is not guaranteed due to the nature of the greedy algorithm. We provide an example in Figure \ref{fig:greedyandours} to show its weakness. Their method will select $C_1$ as the first client since it is the most class-balanced one. Then $C_2$ will be selected because the grouped dataset of $C_1\cup C_2$ is the most class-balanced among the choices $C_1\cup C_2$, $C_1\cup C_3$ and $C_1\cup C_4$. Similarly, it will choose $C_3$ since the grouped dataset of $C_1\cup C_2\cup C_3$ is more class-balanced than  $C_1\cup C_2\cup C_4$. Their method is deterministic and thus only one combination $\{C_1,C_2, C_3\}$ is obtained. However, this is clearly not the optimal solution since $\{C_1,C_3, C_4\}$ is more class-balanced than $\{C_1,C_2, C_3\}$. The above weaknesses motivate us to design a more effective solution for this problem. 
\vspace{-5pt}
\section{Methodology} \label{Sec:Method}
We first propose a metric to measure class-imbalance in Section \ref{sub:CIM}. Then we derive the measure with privacy-preserving techniques in Section \ref{subsec:FHE}. Based on this measure, we then design our client sampling mechanism and show its superiority in Section \ref{sec:samplingSt}.
\vspace{-10pt}
\subsection{Class-Imbalance Measure}\label{sub:CIM}
Assume there are $B$ classes of data in an image classification task, where $B\geq 2$. In the $k$-th communication round, we assume there are $N_k$ available clients and we select $M$ clients from them. To make the presentation concise, we ignore the index ``k'' and assume the set of indices for the available clients is  $\{1,2, 3, ..., N\}$ and the $n$-th available client has its own training dataset $\mathcal{D}_n$. We adopt the following vector of size $B$ to represent the local label distribution of $\mathcal{D}_n$, where $\alpha_{(n,b)}\geq 0$ and $\sum_{b=1}^B\alpha_{(n,b)}=1$,
\begin{align}
    \bm{\alpha}_n=\left[\alpha_{(n,1)}, \alpha_{(n,2)},...,\alpha_{(n,b)},... ,\alpha_{(n,B)}\right]~.
    \label{localdistribution}
\end{align}

We aim to find a subset $\mathcal{M}$ of $\{1,2,3,..,N\}$ of size $M$, such that the following grouped dataset ${\textstyle \mathcal{D}^{g}_{\mathcal{M}}=\bigcup\limits_{n\in \mathcal{M}} \mathcal{D}_n}$ is class-balanced. Assuming the $n$-th client's local dataset has $q_n$ training samples, the following vector $\bm{\alpha}^g_{\mathcal{M}}$  can represent the label distribution of the grouped dataset $\mathcal{D}^{g}_{\mathcal{M}}$, 
\vspace{-1mm}
{\small
\begin{align*}\label{groupdistribution}
      &\bm{\alpha}^g_{\mathcal{M}}=\frac{\sum_{n\in \mathcal{M} }q_n\bm{\alpha}_n}{\sum_{n\in \mathcal{M} }q_n}=\\
      &\left[\frac{\sum_{n\in \mathcal{M} }q_n \alpha_{(n,1)}}{\sum_{n\in \mathcal{M} } q_n},...,\frac{\sum_{n\in \mathcal{M} }q_n \alpha_{(n,b)}}{\sum_{n\in \mathcal{M} } q_n},... ,\frac{\sum_{n\in \mathcal{M} }q_n \alpha_{(n,B)}}{\sum_{n\in \mathcal{M} } q_n}\right]. 
\end{align*}}
Instead of dealing with the Kullback-Leibler (KL) divergence as \cite{Duan2019AstraeaSF,yang2020federated}, which is complicated to analyze, we propose the following function to measure the magnitude of class-imbalance of $\mathcal{M}$, which we call \textit{Quadratic Class-Imbalance Degree (QCID)}:{\begin{align*}
\textit{QCID} (\mathcal{M}) \triangleq \sum_{b=1}^B(\frac{\sum_{n\in \mathcal{M} }q_n \alpha_{(n,b)}}{\sum_{n\in \mathcal{M} } q_n}-\frac{1}{B})^2.
\end{align*}}Essentially, \textit{QCID}($\mathcal{M}$) reflects the $L_2$ distance between the distribution of the grouped dataset $\mathcal{D}^{g}_{\mathcal{M}}$ and the ideally class-balanced dataset that has a uniform label distribution. Although there exist several more commonly-used probabilistic distances other than $L_2$, it is easier to analyze \textit{QCID} and more efficient to calculate while keeping privacy as shown in the next section.

\subsection{Privacy-Preserving QCID Derivation}\label{subsec:FHE}

Our privacy goal is to calculate the value of \textit{QCID} while keeping clients' local distributions $\{\bm{\alpha}_n\}$ hidden from the server since it contains sensitive information. Unlike Kullback-Leibler (KL) divergence which is difficult to analyze, we can expand the expression of \textit{QCID} to explore how the pairwise relationships of the clients' local label distributions $\{\bm{\alpha}_m\}$ affects the class-imbalance degree of $\mathcal{M}$, where $m\in \mathcal{M}$. Below we provide a theorem to show the feasibility of our method.

\begin{theorem}
The \textit{QCID} value is decided by the sum of inner products between each two vectors $\bm{\alpha}_m,\bm{\alpha}_{m^\prime} \in \{\bm{\alpha}_m\}$ with $m\in \mathcal{M}$, {\it i.e.},
\begin{align*}
    \textit{QCID} (\mathcal{M})=\frac{\sum_{n\in \mathcal{M},n^\prime \in \mathcal{M}} q_nq_n^\prime\bm{\alpha}_n\bm{\alpha}_{n^\prime}^T}{(\sum_{n\in \mathcal{M} }q_n)^2} -\frac{1}{B}
\end{align*}\label{theorem1}
\end{theorem}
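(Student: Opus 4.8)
The plan is to prove the identity by a direct algebraic expansion of the definition of \textit{QCID}, exploiting the fact that each $\bm{\alpha}_n$ is a genuine probability vector (its entries are nonnegative and sum to one). To lighten notation I would first abbreviate the grouped size $\sum_{n\in\mathcal{M}} q_n$ by $Q$ and write $p_b = \frac{\sum_{n\in\mathcal{M}} q_n \alpha_{(n,b)}}{Q}$ for the $b$-th coordinate of the grouped label distribution $\bm{\alpha}^g_{\mathcal{M}}$, so that by definition $\textit{QCID}(\mathcal{M}) = \sum_{b=1}^B (p_b - 1/B)^2$.

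First I would expand the square coordinate-wise to obtain three pieces: a quadratic term $\sum_b p_b^2$, a linear term $-\frac{2}{B}\sum_b p_b$, and a constant term $B \cdot \frac{1}{B^2} = \frac{1}{B}$. The linear and constant pieces collapse immediately once I observe that $\sum_{b=1}^B p_b = 1$; this follows because each local vector satisfies $\sum_b \alpha_{(n,b)} = 1$, so summing $p_b$ over $b$ and interchanging the order of summation gives $\frac{1}{Q}\sum_{n\in\mathcal{M}} q_n = 1$. Hence the linear and constant contributions combine to $-\frac{2}{B} + \frac{1}{B} = -\frac{1}{B}$, leaving $\textit{QCID}(\mathcal{M}) = \sum_{b=1}^B p_b^2 - \frac{1}{B}$.

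The remaining task is to re-express $\sum_b p_b^2$ as the claimed sum of pairwise inner products. Writing $p_b^2 = \frac{1}{Q^2}\big(\sum_{n\in\mathcal{M}} q_n \alpha_{(n,b)}\big)\big(\sum_{n'\in\mathcal{M}} q_{n'} \alpha_{(n',b)}\big)$, I would expand the product of the two sums into a double sum over $(n,n')$ and then interchange the $b$-summation with the $(n,n')$-summation. The inner sum $\sum_{b=1}^B \alpha_{(n,b)}\alpha_{(n',b)}$ is exactly the inner product $\bm{\alpha}_n \bm{\alpha}_{n'}^T$, yielding $\sum_b p_b^2 = \frac{1}{Q^2}\sum_{n,n'\in\mathcal{M}} q_n q_{n'} \bm{\alpha}_n \bm{\alpha}_{n'}^T$. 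Substituting this into the expression from the previous step gives the theorem.

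This argument is entirely elementary; the only point that requires care — and the step I would flag as the crux — is the interchange of summation order, where one must keep the two running indices $n$ and $n'$ ranging independently over $\mathcal{M}$ (including the diagonal terms $n = n'$) so that the squared sum of the $p_b$ is recovered as a full double sum rather than, say, only its off-diagonal part.
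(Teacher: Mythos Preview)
Your proposal is correct and follows essentially the same route as the paper's own proof: expand the square in the definition of \textit{QCID}, use $\sum_b p_b = 1$ to collapse the linear and constant terms to $-1/B$, and then rewrite $\sum_b p_b^2$ as a double sum over $(n,n')\in\mathcal{M}\times\mathcal{M}$ to recover the pairwise inner products $\bm{\alpha}_n\bm{\alpha}_{n'}^T$. The only cosmetic difference is that you introduce the abbreviations $Q$ and $p_b$, whereas the paper works directly with the full expressions throughout.
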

\vspace{-15pt}

Theorem \ref{theorem1} reveals the fact that there is no need to know the local label distribution of each client to calculate the \textit{QCID}, as long as we have access to the inner products between each other. To derive the \textit{QCID} for any subset $\mathcal{M} \subseteq \{1,2,3,..,N\}$, we only need to know the following $N\times N$ matrix $\bm{S}$ with element $s_{n,n^\prime}$ being $\bm{\alpha}_n\bm{\alpha}_{n^\prime}^T$, which is the inner product between the local label distributions of the available clients $n$ and $n^\prime$. 
\begin{align*}
 \bm{S}=
 \begin{bmatrix}
q_1q_1\bm{\alpha}_1\bm{\alpha}_{1}^T & q_1q_2\bm{\alpha}_1\bm{\alpha}_{2}^T  & \cdots   & q_1q_N\bm{\alpha}_1\bm{\alpha}_{N}^T   \\ 
q_2q_1\bm{\alpha}_2\bm{\alpha}_{1}^T & q_2q_2\bm{\alpha}_2\bm{\alpha}_{2}^T  & \cdots   & q_2q_N\bm{\alpha}_2\bm{\alpha}_{N}^T  \\
\vdots & \vdots  & \ddots   & \vdots  \\
q_Nq_1\bm{\alpha}_{N}\bm{\alpha}_{1}^T & q_Nq_2\bm{\alpha}_{N}\bm{\alpha}_{2}^T  & \cdots\  & q_Nq_N\bm{\alpha}_{N}\bm{\alpha}_{N}^T  \\
\end{bmatrix}
\end{align*}
Although it is possible to calculate  \textit{QCID} with $\bm{S}$, another concern arises, \textit{can a malicious party infer the values of $\{\bm{\alpha_i}\}$ from  $\mS$ ?} Then we have another theorem to provide privacy protection.
\begin{theorem}\label{theoremprivacy}
One can not derive the values of $\{\bm{\alpha_i}\}$ from the value of $\mS$.
\end{theorem}
\vspace{-2mm}
Based on these two theorems, our privacy goal can be simplified as enabling the server to derive $\bm{S}$ without access to $\{\bm{\alpha_i}\}$. There are several ways to achieve our goal. One option is to leverage the server-side trusted execution environments (TEEs), e.g., Intel SGX~\cite{anati2013innovative}, which allows calculating $S$ without leaking information of $\{\alpha_n\}$. Another potential solution is to adopt Fully Homomorphic Encryption (FHE)~\cite{seal,bgvfhe,fvfhe,helib,halevi2015bootstrapping} to enable the server to compute on encrypted data (\textit{i.e.}, $\{\bm{\alpha_i}\}$) to derive $\mS$.  We provide an example of the system skelon in Section \ref{FHEexample} to illustrate how to derive ${S}$ without knowing the local label distributions $\{\bm{\alpha_i}\}$ using FHE.  Since we focus on efficient algorithms to reduce class-imbalance instead of designing the fundamental infrastructure for computing (which is beyond our scope and not a contribution of this paper), we leave the detailed system design for future work.

\subsection{A Client Sampling Mechanism}\label{sec:samplingSt}
To select the most class-balanced grouped dataset $\mathcal{D}^g_{\mathcal{M}}$, we need to find the optimal subset $\mathcal{M}^*$ that has the lowest $QCID$ value, which is defined as follows:  
{\begin{align*}
    \mathcal{M}^*\triangleq \mathop{\arg\min}_{\mathcal{M} \subseteq \{1,2,3,..,N\} }\frac{\sum_{n\in \mathcal{M},n^\prime \in \mathcal{M}} q_nq_{n^\prime}\bm{\alpha}_n\bm{\alpha}_{n^\prime}^T}{(\sum_{n\in \mathcal{M} }q_n)^2} -\frac{1}{B}.
\end{align*}}
The main challenge is computational complexity. To find the exact optimal $\mathcal{M}^*$, we need to loop through all the possible cases and find the lowest $QCID$ value. The computational complexity thereafter will be $\mathcal{O}\left(\tbinom{N}{M}\times M^2\right)$, which is unacceptable when $N$ is extremely large.
\vspace{-5pt}
\paragraph{A probability approach}
To overcome the computational bottleneck, instead of treating $\mathcal{M}$ as a determined set, we consider it as a sequence of random variables, \textit{i.e.} $\mathcal{M}=\{C_1,C_2,...,C_m,...,C_M\}$ and assign it with some probability. Our expectation is that $\mathcal{M}$ should have higher probability to be sampled with if it is more class-balanced. This means $P(C_1=c_1,C_2=c_2,...,C_m=c_m,...,C_M=c_M)$ should be larger if $\mathcal{M}=\{c_1,c_2,...,c_M\}$ has a lower $QCID$ value.
Our sampling strategy generates the elements in $\mathcal{M}$ in a sequential manner, {\it i.e.}, we first sample $\mathcal{M}_1=\{c_1\}$ according to the probability of
$P(C_1=c_1)$, then sample $c_2$ to form $\mathcal{M}_2=\{c_1,c_2\}$ according to the conditional probability $P(C_2=c_2|C_1=c_1)$. The same procedure applies for the following clients 
until we finally obtain $\mathcal{M}=\{c_1,c_2,...,c_M\}$. In the following, we will design proper conditional probabilities such that the joint distribution of client selection satisfies our expectations.

Let $T_{n}$ denote the number of times that client $n$ has been selected. Once client $n$ has been selected in a communication round, $T_{n} \rightarrow T_{n}+$ 1, otherwise, $T_{n} \rightarrow T_{n}$. Inspired by combinatorial upper confidence bounds (CUCB) algorithm \cite{pmlr-v28-chen13a} and previous work in \cite{yang2020federated}, in the $k$-th communication round, the first element is designed to be sampled with the following probability:
\vspace{-5pt}
\begin{align*}
    P(C_1=c_1) \propto \frac{1}{[QCID(\mathcal{M}_1)]^{\beta_1}}+\lambda \sqrt{\frac{3 \ln k}{2 T_{c_1}}}, \quad \beta_1>0,~
\end{align*}
where $\lambda$ above is the exploration factor to balance the trade-off between exploitation and exploration. The second term will add a higher probability to the clients that have never been sampled before in the following communication rounds. After sampling $C_1$, the second client is defined to be sampled with probability
{ \begin{align*}
    P(C_2=c_2|C_1=c_1) \propto \frac{\frac{1}{[QCID(\mathcal{M}_2)]^{\beta_2}}}{\frac{1}{[QCID(\mathcal{M}_1)]^{\beta_1}}+\alpha \sqrt{\frac{3 \ln k}{2 T_{c_1}}}}, \beta_2>0.
\end{align*}}
For the m-th client, where $2 < m \leq M$, we define 
{ \begin{align*}
    P(C_m=c_m|C_{m-1}=c_{m-1},...,C_2=c_2,C_1=c_1) \\
    \propto  \frac{[QCID(\mathcal{M}_{m-1})]^{\beta_{m-1}}}{[QCID(\mathcal{M}_m)]^{\beta_m }}, \quad \beta_{m-1},\beta_{m}>0.
\end{align*}}
With the above sampling process, the final probability to sample $\mathcal{M}$ is $P(C_1=c_1,C_2=c_1,...,C_M=c_M)=P(C_1=c_1)\times P(C_2=c_2|C_1=c_1)\cdots\times P(C_M=c_M|C_{M-1}=c_{M-1},...,C_2=c_2,C_1=c_1)\propto {1}/{[QCID(\mathcal{M})]^{\beta_M }}$. Since $\beta_M>0$, this matches our goal that the $\mathcal{M}$ with lower $QCID$ value should have higher probability to be sampled with. Our mechanism, Fed-CBS, is summarized in Algorithm \ref{Algorithm123}  .
\begin{algorithm}[!htbp]
\begin{algorithmic}
\caption{Fed-CBS}\label{Algorithm123}
\STATE {\bfseries Initialization:} initial local model $\boldsymbol{w}^{(0)}$, client index subset $\mathcal{M}=\varnothing$, $K$ communication rounds, $k=0$, $T_n=1$\\
\WHILE{$k < K$}
\STATE \textbf{\color{red}// Client Selction}:
\FOR{$n$ {\bfseries in} $\{1,2,...,N\}$}
\IF{$n\in\mathcal{M}$}
   \STATE $T_n\rightarrow T_n+1$
   \ELSE 
   \STATE $T_n\rightarrow T_n$;
   \ENDIF
\ENDFOR
\STATE Update $\mathcal{M}$ using our proposed sampling strategy in Section \ref{sec:samplingSt}\\
\STATE \textbf{\color{red}// Local Updates:}
\FOR{$n\in \mathcal{M}$}
\STATE $\boldsymbol{w}_n^{(k+1)}\leftarrow \ Update(\boldsymbol{w}^{(k)})$. \\
\ENDFOR
\STATE \textbf{\color{red}// Global Aggregation}:
\STATE $\boldsymbol{w}^{(k+1)}\leftarrow Aggregate(\boldsymbol{w}_n^{(k+1)})$ for $n \in\mathcal{M}$
\ENDWHILE
\end{algorithmic}    
\end{algorithm}

\paragraph{Details and analysis}\label{remark1}
For any $1<m < M$, we have 
{\begin{align*}
    P(C_1=c_1,C_2=c_1,...,C_m=c_m)\propto \frac{1}{[QCID(\mathcal{M}_m)]^{\beta_m }}.
\end{align*}}
This means when we generate the first $m$ elements of $\mathcal{M}$, we expect the $\mathcal{M}_{m}$ should be more class-balanced since the $\mathcal{M}_{m}$ with lower $QCID$ value has a higher probability of being sampled. This is different from the algorithm in \cite{yang2020federated}, which greedily chooses the $c_m$ from $\{1,2,..,N\}/\mathcal{M}_{m-1}$ that makes $\mathcal{M}_{m}$ the most class-balanced one. 
Unlike the greedy algorithm which has no guarantees on finding the optimal client set, our method can generate the globally optimal set of clients in the sense of probability. 
An example is provided in Figure \ref{fig:greedyandours} to demonstrate that our method can overcome the pitfall of the greedy method. After selecting the first two clients, $\{C_1, C_3\}$ our method is less class-balanced than $\{C_1, C_2\}$ chosen by the greedy method. However, after making the last choice, our method has the chance to derive a perfectly class-balanced set $\{C_1, C_3, C_4\}$. In contrast, the greedy method can only get one result $\{C_1, C_2, C_3\}$, which is less class-balanced.

We require the distribution of $P(C_1=c_1,C_2=c_1,...,C_m=c_m)$ to be more dispersed when $m$ is small. This is because we expect our sampling strategy to explore more possible cases of client composition at the beginning. We require the distribution of $P(C_1=c_1,C_2=c_1,...,C_m=c_m)$ to be less dispersed when $m$ is large. This is because as we approach the end of our sampling process, we expect our sampling strategy can find the $\mathcal{M}_m$ that is more class-balanced. Especially when $m=M$, we hope the strategy to find the client $c_M$ which can make $\mathcal{M}$ the most class-balanced. Since 
\vspace{-0.1cm}
{ \begin{align*}
    &P(C_1=c_1,C_2=c_1,...,C_m=c_m)\propto \frac{1}{[QCID(\mathcal{M}_m)]^{\beta_m }}
\end{align*}
we can set $0<\beta_1 < \beta_2<...<\beta_M$ to satisfy the above requirements.}

{\em Remark: We set a lower bound for $QCID(\mathcal{M}_m)$ as $L_{b}$ since $QCID(\mathcal{M}_m)=0$ in some special cases will cause  $P(C_m=c_m|C_{m-1}=c_{m-1},...,C_1=c_1)\rightarrow \infty$.}
When viewing the conditional distribution as the likelihood in Bayesian inference, our probability can be interpreted as an estimate of the posterior distribution. This allows us to comprehend our algorithm through the lens of Bayesian sampling \cite{welling2011bayesian,liu2019stein,pmlr-v108-zhang20d,zhang2019cyclical}. In our future studies, we will further analyze the connection between them.
Below we present two theorems to show the superiority of our proposed sampling strategy. 
\begin{theorem} [Class-Imbalance Reduction]\label{CIR}
We denote the probability of selecting $\mathcal{M}$ with our strategy with $\beta_M$ as $P_{\beta_M}$ and the probability of selecting $\mathcal{M}$ with the random selection as $P_{rand}$. Our method can reduce the expectation of $QCID$ compared to the random selection mechanism. In other words, we have 
\begin{align*}
    \mathbb{E}_{\mathcal{M}\sim P_{\beta_M}} QCID(\mathcal{M}) < \mathbb{E}_{\mathcal{M}\sim P_{rand}} QCID(\mathcal{M}).
\end{align*}
Furthermore, if increasing the value $\beta_M$, the expectation of $QCID$ can be further reduced, {\it i.e.}, for $\beta^\prime_M>\beta_M$, we have
\begin{align*}
     \mathbb{E}_{\mathcal{M}\sim P_{\beta^\prime_M}} QCID(\mathcal{M})<\mathbb{E}_{\mathcal{M}\sim P_{\beta_M}} QCID(\mathcal{M}). 
\end{align*}
\end{theorem}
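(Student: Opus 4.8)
The plan is to recast both inequalities as statements about a single one-parameter family of Gibbs (Boltzmann) distributions indexed by $\beta_M$, and then to read them off from the monotonicity of a Gibbs average in its inverse-temperature parameter. First I would reduce to a finite sample space: enumerate the $L=\binom{N}{M}$ size-$M$ subsets as $\mathcal{M}_1,\dots,\mathcal{M}_L$ and set $x_i\triangleq QCID(\mathcal{M}_i)>0$ (positivity is guaranteed by the lower bound $L_b$ from the Remark). Because both $P_{\beta_M}$ and $P_{rand}$, as well as $QCID$ itself, are invariant under permutation of the selected clients, I can work directly with sets; using the telescoped form of the joint sampling probability established in the preceding construction, $P_{\beta_M}$ assigns mass $x_i^{-\beta_M}\big/\sum_j x_j^{-\beta_M}$ to $\mathcal{M}_i$, while $P_{rand}$ assigns the uniform mass $1/L$.

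Writing $u_i\triangleq\ln x_i$, I would introduce the family $P_g(i)\propto e^{-g u_i}=x_i^{-g}$ for $g\ge 0$, so that $P_0=P_{rand}$ and $P_{\beta_M}=P_g\big|_{g=\beta_M}$. Define $h(g)\triangleq\mathbb{E}_{P_g}[QCID]=\big(\sum_i x_i^{1-g}\big)\big/\big(\sum_i x_i^{-g}\big)$. The single computation that drives everything is the standard thermodynamic identity
\begin{align*}
h'(g)=-\mathrm{Cov}_{P_g}(X,\ln X),
\end{align*}
obtained by differentiating the ratio and recognizing the result as minus the covariance of the observable $X$ (taking value $x_i$) with the energy $u_i=\ln x_i$ under $P_g$. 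Since $t\mapsto\ln t$ is strictly increasing, $X$ and $\ln X$ are comonotone, so this covariance is nonnegative; hence $h'(g)\le 0$, i.e.\ $h$ is nonincreasing.

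Both claims then follow at once. The second inequality is immediate: $\beta'_M>\beta_M$ gives $h(\beta'_M)\le h(\beta_M)$, and the first inequality is the special case $h(\beta_M)\le h(0)=\mathbb{E}_{rand}[QCID]$. Equivalently, the first part is a direct application of Chebyshev's sum inequality to the oppositely sorted pair $X$ and $X^{-\beta_M}$, which yields $\mathbb{E}[X\cdot X^{-\beta_M}]\le\mathbb{E}[X]\,\mathbb{E}[X^{-\beta_M}]$ under the uniform measure and hence $\mathbb{E}_{P_{\beta_M}}[X]\le\mathbb{E}_{rand}[X]$ after dividing by $\mathbb{E}[X^{-\beta_M}]>0$.

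I expect the only real obstacle to be upgrading these $\le$'s to the strict $<$'s asserted in the statement. Strictness fails exactly when the covariance vanishes, i.e.\ when all the values $x_i=QCID(\mathcal{M}_i)$ coincide, so the argument needs the mild non-degeneracy hypothesis that at least two candidate subsets have distinct $QCID$ values. Under non-IID data this is generic --- distinct clients carry distinct label distributions, so different groupings produce different imbalances --- and I would state it explicitly as the condition guaranteeing the strict inequalities, noting that in the degenerate all-equal case every selection rule is trivially equivalent. A secondary bookkeeping point is to confirm that the exploration terms telescope out of the product so that $P_{\beta_M}(\mathcal{M})\propto[QCID(\mathcal{M})]^{-\beta_M}$ holds exactly; once that normalized form is granted, the covariance computation is routine.
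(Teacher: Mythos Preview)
Your proposal is correct and, modulo the same assumption the paper makes (that the telescoped joint law satisfies $P_{\beta_M}(\mathcal{M})\propto[QCID(\mathcal{M})]^{-\beta_M}$ exactly), it proves the theorem. Your identification of the strictness condition --- that not all $x_i$ coincide --- is precisely the one the paper invokes as well (``it is easy to find $x_i$ and $x_{i'}$ satisfying $x_i\neq x_{i'}$'').

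The route, however, differs from the paper's. The paper treats the two inequalities separately and proves each by the rearrangement inequality: for the first it forms the ratio $\mathbb{E}_{P_{\beta_M}}[QCID]/\mathbb{E}_{P_{rand}}[QCID]$, rewrites the denominator via cyclic shifts, and bounds term-by-term; for the second it writes $\beta'_M=\beta_M+\Delta\beta$, expands the ratio as a double sum over pairs $(i,j)$, and applies the two-term rearrangement inequality $\frac{x_i}{x_i^{\Delta\beta}}+\frac{x_j}{x_j^{\Delta\beta}}\le\frac{x_i}{x_j^{\Delta\beta}}+\frac{x_j}{x_i^{\Delta\beta}}$ to each summand. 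Your approach instead embeds both statements into the single family $h(g)=\mathbb{E}_{P_g}[X]$ with $P_g\propto X^{-g}$, and derives monotonicity from $h'(g)=-\mathrm{Cov}_{P_g}(X,\ln X)\le 0$. This is more economical: one derivative computation replaces two separate rearrangement arguments, the strictness condition (nonconstant $X$) falls out transparently as nondegeneracy of the covariance, and the $P_{rand}$ case is just $g=0$. The Chebyshev-sum alternative you mention for the first part is essentially the paper's rearrangement argument recast. What the paper's approach buys is that it stays entirely elementary (no calculus, just finite-sum inequalities), which may be preferable for readers unfamiliar with the Gibbs/free-energy formalism.
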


\begin{theorem}[Computation Complexity Reduction]\label{CCR}
The computation complexity of our method is $\mathcal{O}\left(N\times M^2\right)$, which is much smaller than the exhaustive search of $\mathcal{O}\left(\tbinom{N}{M}\times M^2\right)$.  
\end{theorem}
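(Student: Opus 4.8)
The plan is to bound the running time of the sequential sampling procedure of Algorithm~\ref{Algorithm123}, treating the inner-product matrix $\bm{S}$ (equivalently, all $s_{n,n'}=\bm{\alpha}_n\bm{\alpha}_{n'}^T$) as already available, since it is produced once per round by the FHE stage and is not part of the selection cost analyzed here. First I would fix the skeleton: the procedure draws the $M$ clients one at a time, so it consists of exactly $M$ rounds. In round $m$ the prefix $\mathcal{M}_{m-1}$ of size $m-1$ is fixed, and to draw $C_m$ we must evaluate the unnormalized conditional weight for each admissible candidate $c\in\{1,\dots,N\}\setminus\mathcal{M}_{m-1}$. By the definition of the conditional probabilities, for a fixed prefix the only candidate-dependent quantity is $QCID(\mathcal{M}_{m-1}\cup\{c\})$, while the factor $[QCID(\mathcal{M}_{m-1})]^{\beta_{m-1}}$ is a shared constant. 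Hence the cost of a round reduces to (i) evaluating $QCID(\mathcal{M}_{m-1}\cup\{c\})$ for at most $N$ candidates, (ii) forming the normalizing constant by summing these $\le N$ weights, and (iii) drawing one index from the resulting discrete distribution.

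The key step is to evaluate each $QCID(\mathcal{M}_{m-1}\cup\{c\})$ cheaply using Theorem~\ref{theorem1} rather than recomputing a double sum from scratch. I would maintain two running quantities for the fixed prefix: the denominator base $Q_{m-1}=\sum_{n\in\mathcal{M}_{m-1}}q_n$ and the numerator $A_{m-1}=\sum_{n,n'\in\mathcal{M}_{m-1}}q_nq_{n'}s_{n,n'}$, both inherited from the previous round. For a candidate $c$, Theorem~\ref{theorem1} gives
\[
QCID(\mathcal{M}_{m-1}\cup\{c\})=\frac{A_{m-1}+2q_c\sum_{n\in\mathcal{M}_{m-1}}q_n s_{n,c}+q_c^2 s_{c,c}}{(Q_{m-1}+q_c)^2}-\frac{1}{B},
\]
so the only new work is the cross term $\sum_{n\in\mathcal{M}_{m-1}}q_n s_{n,c}$, a sum of $m-1$ precomputed entries costing $\mathcal{O}(m)$; everything else is $\mathcal{O}(1)$. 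Thus a single candidate is handled in $\mathcal{O}(m)\le\mathcal{O}(M)$ time, and round $m$ costs $\mathcal{O}(N\cdot m)$ for the $N$ candidate evaluations, which dominates the $\mathcal{O}(N)$ cost of normalization and sampling.

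Summing over the $M$ rounds then yields $\sum_{m=1}^{M}\mathcal{O}(N\cdot m)=\mathcal{O}(N\cdot M^2)$, the claimed bound; after drawing $C_m$ I would update $A_m$ and $Q_m$ in $\mathcal{O}(m)$ so the invariant is preserved for the next round at no extra asymptotic cost. The comparison with exhaustive search is then immediate: since $\binom{N}{M}\ge N$ for $1\le M\le N-1$, we have $N M^2\le\binom{N}{M}M^2$, and because $\binom{N}{M}$ grows super-polynomially in $N$ while $N M^2$ is linear in $N$, the gap is enormous for large $N$.

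I expect the only real obstacle to be the incremental evaluation in the middle step. A naive proof that recomputes the full $\mathcal{O}(m^2)$ double sum for every candidate in every round would give $\mathcal{O}(N M^3)$, missing the stated bound by a factor of $M$. The argument therefore hinges on exploiting Theorem~\ref{theorem1} to reduce the per-candidate update to the single $\mathcal{O}(m)$ cross term, together with the bound $m\le M$. I would also remark that one can in fact do better by caching the cross sums $\sum_{n\in\mathcal{M}_{m-1}}q_n s_{n,c}$ across rounds, but the stated $\mathcal{O}(N\times M^2)$ bound does not require this refinement.
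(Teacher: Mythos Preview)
Your proposal is correct and follows essentially the same approach as the paper: both arguments exploit Theorem~\ref{theorem1} to evaluate $QCID(\mathcal{M}_{m-1}\cup\{c\})$ incrementally from the prefix data, so that each of the $\le N$ candidates in round $m$ costs $\mathcal{O}(m)$ (the paper counts the $m^2-(m-1)^2=2m-1$ new entries of the principal submatrix, which is exactly your cross term plus the diagonal), and then sum over rounds to get $\sum_{m=1}^{M}\mathcal{O}(Nm)=\mathcal{O}(NM^2)$. Your presentation with explicit running sums $A_{m-1},Q_{m-1}$ is slightly more algorithmic, but the underlying argument is identical.
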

Theorem \ref{CCR} shows that the computation complexity of our method is independent of the number of classes. Since the dimension of neural networks is typically much larger than the class distribution vector $\bm{\alpha}_n$, the additional communication cost is almost negligible. Besides, we also prove the NP-hardness of the problem formally in Section \ref{nphardness} in the appendix.

\vspace{-5pt}
\section{Convergence Analysis}\label{Sec:Convergence}
\begin{table*}[!hbtp]
\centering
\renewcommand{\arraystretch}{1}
\setlength\tabcolsep{10pt}
\scalebox{1}{\begin{tabular}{|c|c|c|c|c|c|c|}
\hline
\multicolumn{2}{|l|}{}                                                                                  & all            & rand        & pow-d            & Fed-cucb      & Fed-CBS          \\ \hline
\multirow{3}{*}{Communication Rounds}         & $\alpha$=0.1 & 757$\pm$155   & 951$\pm$202 & 1147$\pm$130    & 861$\pm$328  & \textbf{654$\pm$96}   \\ \cline{2-7} 
                                                                                         & $\alpha$=0.2 & 746$\pm$95     & 762$\pm$105   & 741$\pm$111    & 803$\pm$220           & \textbf{475$\pm$110}   \\ \cline{2-7} 
                                                                                         & $\alpha$=0.5 & 426$\pm$67     & 537$\pm$115   & 579$\pm$140   & 1080$\pm$309           & \textbf{384$\pm$74}   \\ \hline
\multirow{3}{*}{$\mathbb{E}[QCID](10^{-2})$} & $\alpha$=0.1 & 1.01$\pm$0.01 & 8.20$\pm$0.21  & 12.36$\pm$0.26 & 7.09$\pm$2.27  & \textbf{0.62$\pm$0.20} \\ \cline{2-7} 
                                                                                         & $\alpha$=0.2 & 0.93$\pm$0.03  & 7.54$\pm$0.27 & 10.6$\pm$0.48  & 5.93$\pm$1.01 & \textbf{0.51$\pm$0.12} \\ \cline{2-7} 
                                                                                         & $\alpha$=0.5 & 0.72$\pm$0.03  & 5.87$\pm$0.24 & 7.36$\pm$0.57  & 6.47$\pm$0.77 & \textbf{0.36$\pm$0.04} \\ \hline
\end{tabular}}
\caption{The communication rounds required for targeted test accuracy and the averaged QCID values. The targeted test accuracy is $45\%$ for $\alpha=0.1$, $47\%$ for $\alpha=0.2$ and $50\%$ for $\alpha=0.5$. The results are the mean and the standard deviation over 4 different random seeds.}
\label{roundqcid1}
\vspace{-5pt}
\end{table*}

\begin{figure*}[!hbtp]
  \centering
  \includegraphics[width=1\linewidth]{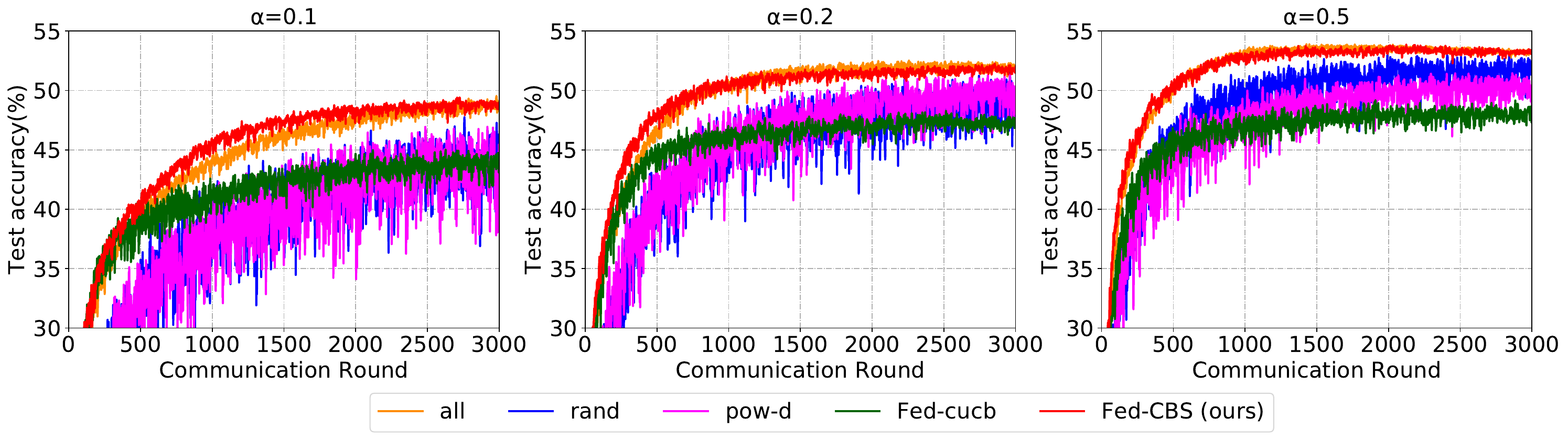}\\
  \vspace{-0.3cm}
\caption{Test accuracy on Cifar-10 under three heterogeneous settings.}
\label{fig:figcifar1}
\vspace{-15pt}
\end{figure*}
To analyze the convergence of our method, we first define our objective functions and adopt some  general assumptions. Our global objective function $\widetilde{F}>0$ can be decomposed as $\widetilde{F}=\frac{1}{B}\sum_{b=1}^B\widetilde{F}_b$, where $\widetilde{F}_b$ is the averaged loss function with respect to all the data of the $b$-th class in the global dataset. Similarly, the $n$-th client's local objective function $F_n$ can be decomposed as $F_n=\sum_{b=1}^B\alpha_{(n,b)}{F}_{n,b}$, where ${F}_{n,b}$ is the averaged loss function with respect to all the data of the $b$-th class in the $n$-th client's local dataset, and $\alpha_{(n,b)}$ is defined in Equation \ref{localdistribution}. Moreover, let $\boldsymbol{w}^{(k)}$ denote the global model parameters at the $k$-th communication round and $\boldsymbol{w}^{(0)}$ denote the initial global model parameters. If not stated explicitly, $\nabla$ denotes $\nabla_w$ throughout the paper.

\begin{assumption}[Smoothness]\label{ass1}
The global objective function $\widetilde{F}$  and each client's averaged loss function ${F}_{n,b}$ are Lipschitz smooth, \textit{i.e.} {\small $\left\|\nabla \widetilde{F}(\boldsymbol{w})-\nabla\widetilde{F}(\boldsymbol{w}^\prime)\right\| \leq L_{\widetilde{F}}\|\boldsymbol{w}-\boldsymbol{w}^\prime\|$} and   ${\small \left\|\nabla {F}_{n,b}(\boldsymbol{w})-\nabla{F}_{n,b}(\boldsymbol{w}^\prime)\right\| \leq L_{n,b}\|\boldsymbol{w}-\boldsymbol{w}^\prime\|, \forall n,b,\boldsymbol{w},\boldsymbol{w}^\prime}$. 
\end{assumption}

\begin{assumption}[Unbiased Gradient and Bounded Variance]\label{ass2}
The stochastic gradient $g_{n}$ at each client is an unbiased estimator of the local gradient: ${\small\mathbb{E}_{\xi}\left[g_{n}(\boldsymbol{w} \mid \xi)\right]=\nabla F_{n}(\boldsymbol{w})}$, with bounded variance ${\small\mathbb{E}_{\xi}\left[\left\|g_{n}(\boldsymbol{w} \mid \xi)-\nabla F_{n}(\boldsymbol{w})\right\|^{2}\right] \leq \sigma^{2},\forall  \boldsymbol{w}}$, where ${\small \sigma^{2} \geq 0}$.
\end{assumption}

\begin{assumption}[Bounded Dissimilarity]\label{ass3}
There exist two non-negative constants ${ \delta \geq 1, \gamma^{2} \geq 0}$ such that ${\small \sum_{b=1}^{B} \frac{1}{B}\left\|\nabla \widetilde{F}_b(\boldsymbol{w})\right\|^{2} \leq \delta\left\|\sum_{b=1}^{B} \frac{1}{B} \nabla \widetilde{F}_{b}(\boldsymbol{w})\right\|^{2}+\gamma^{2}}, {\small \forall \boldsymbol{w}.}$
\end{assumption}

\begin{assumption}[Class-wise Similarity]\label{ass4}
For each class $b$, the discrepancy between the gradient of global averaged loss function and the local one is bounded by some constant in $l^2$ norm. That means, for every $n$ and $b$, we have ${\small \left\|\nabla\widetilde{F}_b(\boldsymbol{w})-\nabla F_{n,b}(\boldsymbol{w})\right\|^{2} \leq \kappa^2_{n,b}}$, ${\small \forall \boldsymbol{w}}$. 
\end{assumption}
\vspace{-5pt}
Assumptions \ref{ass1}, \ref{ass2} and \ref{ass3} have been widely adopted in previous literature on the theoretical analysis of FL \cite{li2019convergence,Cho2020ClientSI,wang2020tackling}.  Assumption \ref{ass4} is based on the similarity among the data from the same class. Similar to the standard setting \cite{wang2020tackling}, the convergence of our algorithm is measured by the norm of the gradients, stated in Theorem~\ref{theo:conv}.

\begin{theorem}
 Under Assumptions \ref{ass1} to \ref{ass4}, if the total communication rounds $K$ is pre-determined and the learning rate is set as $\eta=\frac{s}{10L\sqrt{\tau(\tau-1)K}}$, where $s<1$, $L=\max_{\{n,b\}}L_{n,b}$ and $\tau$ is the number of local update iterations, the minimal gradient norm of $\widetilde{F}$ is bounded as:
\vspace{-5pt}
{ \begin{align*}
    &\min _{k \leq K}\left\|\nabla \widetilde{F}\left(\boldsymbol{w}^{(k)}\right)\right\|^{2} \leq \frac{1}{V}[\frac{\sigma^2s^2}{25\tau K}+\frac{sL_{\widetilde{F}}\sigma^2}{10L\sqrt{\tau(\tau-1)K}}\nonumber\\
    &+5\kappa^2+\frac{10L\sqrt{\tau(\tau-1)}\widetilde{F}\left(\boldsymbol{w}^{(0)}\right)}{s \sqrt{K}}+\gamma^2\mathbb{E}[QCID]],
\end{align*}\label{theo:conv}
\vspace{-5pt}
where $V=\frac{1}{3}-\delta B \mathbb{E}[QCID]$ and  $\kappa=\max_{\{n,b\}}\kappa_{n,b}$.}
\end{theorem}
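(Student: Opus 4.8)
The plan is to follow the standard descent-lemma template for non-convex federated optimization, with the one genuinely new ingredient being a bias decomposition that isolates the class-imbalance measure $\textit{QCID}$. I would start from the $L_{\widetilde{F}}$-smoothness of the global objective (Assumption \ref{ass1}) and apply the descent inequality to consecutive iterates to obtain
\begin{align*}
\widetilde{F}(\boldsymbol{w}^{(k+1)}) \le \widetilde{F}(\boldsymbol{w}^{(k)}) + \langle \nabla\widetilde{F}(\boldsymbol{w}^{(k)}),\, \boldsymbol{w}^{(k+1)}-\boldsymbol{w}^{(k)}\rangle + \tfrac{L_{\widetilde{F}}}{2}\|\boldsymbol{w}^{(k+1)}-\boldsymbol{w}^{(k)}\|^2.
\end{align*}
I would then substitute the aggregated update, which over $\tau$ local SGD steps on the selected set $\mathcal{M}$ is $\boldsymbol{w}^{(k+1)}-\boldsymbol{w}^{(k)}=-\eta\sum_{t}\sum_{n\in\mathcal{M}} p_n\, g_n$ with weights $p_n=q_n/\sum_{m\in\mathcal{M}}q_m$, and take expectations over both the stochastic-gradient noise (Assumption \ref{ass2}) and the client-sampling randomness, the latter being precisely what turns $\textit{QCID}(\mathcal{M})$ into $\mathbb{E}[\textit{QCID}]$.

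The crux is controlling the inner-product term through the gap between the expected aggregated direction and the class-balanced target $\nabla\widetilde{F}=\tfrac1B\sum_b\nabla\widetilde{F}_b$. Swapping the client and class summations gives $\sum_{n\in\mathcal{M}}p_n\nabla F_n=\sum_b \alpha^g_{\mathcal{M},b}\,\overline{\nabla F}_{\cdot,b}$, where $\alpha^g_{\mathcal{M},b}$ is the $b$-th component of the grouped label distribution $\bm{\alpha}^g_{\mathcal{M}}$ and $\overline{\nabla F}_{\cdot,b}$ is the correspondingly weighted average of the local class-$b$ gradients. I would then split the deviation as
\begin{align*}
\textstyle\sum_{n\in\mathcal{M}} p_n\nabla F_n - \nabla\widetilde{F} = \underbrace{\textstyle\sum_b \alpha^g_{\mathcal{M},b}\big(\overline{\nabla F}_{\cdot,b}-\nabla\widetilde{F}_b\big)}_{\text{class similarity}} + \underbrace{\textstyle\sum_b \big(\alpha^g_{\mathcal{M},b}-\tfrac1B\big)\nabla\widetilde{F}_b}_{\text{class imbalance}}.
\end{align*}
The first part is bounded by $\kappa$ via Assumption \ref{ass4}, yielding the $5\kappa^2$ term. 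Applying Cauchy--Schwarz to the second part produces exactly $\textit{QCID}\cdot\sum_b\|\nabla\widetilde{F}_b\|^2$, and Assumption \ref{ass3} then rewrites $\sum_b\|\nabla\widetilde{F}_b\|^2$ in terms of $\|\nabla\widetilde{F}\|^2$ and $\gamma^2$. This single step simultaneously generates the right-hand $\gamma^2\mathbb{E}[\textit{QCID}]$ term and the $\delta B\,\mathbb{E}[\textit{QCID}]\,\|\nabla\widetilde{F}\|^2$ contribution that must be carried back to the left-hand side, which is where $V=\tfrac13-\delta B\,\mathbb{E}[\textit{QCID}]$ originates.

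In parallel I would bound the local drift $\|\boldsymbol{w}_n^{(k,t)}-\boldsymbol{w}^{(k)}\|^2$ accumulated over the $\tau$ inner iterations by the usual recursion built on $L$-smoothness and the variance bound $\sigma^2$; combined with the quadratic smoothness term this, after inserting $\eta=\tfrac{s}{10L\sqrt{\tau(\tau-1)K}}$, delivers the $\tfrac{\sigma^2 s^2}{25\tau K}$ and $\tfrac{sL_{\widetilde{F}}\sigma^2}{10L\sqrt{\tau(\tau-1)K}}$ pieces. Finally I would telescope the per-round inequality over $k=0,\dots,K-1$ so that $\widetilde{F}(\boldsymbol{w}^{(0)})$ appears as the boundary term, collect the coefficient of $\|\nabla\widetilde{F}\|^2$ into $V$, divide through, and bound the minimum gradient norm by the average. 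The main obstacle I anticipate is precisely this last bookkeeping: because the class-imbalance term feeds back into the $\|\nabla\widetilde{F}\|^2$ coefficient, the argument only closes when $V>0$, i.e. $\mathbb{E}[\textit{QCID}]<\tfrac{1}{3\delta B}$, and keeping the Young's-inequality constants consistent so the coefficient lands at $\tfrac13$ before the $\delta B\,\mathbb{E}[\textit{QCID}]$ subtraction is delicate. The drift recursion is routine but must be tracked carefully to reproduce the stated constants.
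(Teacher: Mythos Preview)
Your proposal is correct and follows essentially the same route as the paper: the descent lemma, the bias split into a class-imbalance piece (handled by Cauchy--Schwarz plus Assumption~\ref{ass3}, which produces both the $\gamma^2\mathbb{E}[\textit{QCID}]$ term and the $\delta B\,\mathbb{E}[\textit{QCID}]\,\|\nabla\widetilde{F}\|^2$ contribution absorbed into $V$) and a class-similarity piece (Assumption~\ref{ass4} giving the $\kappa^2$ term), together with the standard local-drift recursion---which the paper imports verbatim from the FedNova analysis of \cite{wang2020tackling}---and a final telescoping step. The only organizational difference is that the paper works throughout with the normalized gradient $\boldsymbol{h}_n^{(k)}=\frac{1}{\tau}\sum_t\nabla F_n(\boldsymbol{w}_n^{(k,t)})$ evaluated at the drifted local iterates and then peels off the drift inside what you call the class-similarity term, whereas you decompose at the synchronized point $\boldsymbol{w}^{(k)}$ and handle drift ``in parallel''; these are equivalent bookkeepings.
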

If the class-imbalance in client selection is reduced,  $\mathbb{E}[QCID]$ will decrease. Consequently,  $\frac{1}{V}$ and $\frac{\mathbb{E}[QCID]}{V}$ will also decrease, making the convergence bound on the right side tighter\footnote{Theorem \ref{theo:conv} requires the $\beta_M$ in our method to be large enough to make $\mathbb{E}[QCID]<\frac{1}{3\delta B}$ according to Theorem \ref{CIR}. How to explicitly derive a lower bound for $\beta_M$ is also very interesting and we leave it as a theoretical future work.}. Therefore, Theorem \ref{theo:conv} not only provides a convergence guarantee for Fed-CBS, but also proves the class-imbalance reduction in client selection could benefit FL, {\it i.e.}, more class-balance leads to faster convergence. 
\vspace{-10pt}
\section{Experiments}\label{Sec:experiments}
We conduct thorough experiments on three public benchmark datasets, CIFAR-10 \cite{cifar10}, Fashion-MNIST \cite{xiao2017fashionmnist} and FEMNIST in the Leaf Benchmark \cite{caldas2018leaf}.  In all the experiments, we simulate cross-device federated learning (CDFL), where the system runs with a large number of clients with only a fraction of them available in each communication round, and we make client selections on those available clients. The results show that our method can achieve faster and more stable convergence compared with four baselines: random selection (rand), Power-of-choice Selection Strategy (pow-d)~\citep{Cho2020ClientSI}, the method in  \citet{yang2020federated} (Fed-cucb), and the ideal setting where we select all the available clients (all). To compare them efficiently in the main text, we present the results from Cifar-10 where the whole dataset is divided to 200 (or 120) clients, since we need to engage all the clients for the ideal setting. To simulate more realistic settings where there are thousands of clients, we conduct our method on FEMNIST in the Leaf Benchmark with more then 3000 clients. Due to the space limit, we move the results of FEMNIST, Fashion-MNIST, and the ablation studies to Section \ref{Sec:resultfmnist} $\&$ \ref{Sec:ablation} in the Appendix. For Fashion-MNIST, we adopt FedNova \citep{wang2020tackling} to show that our method can be organically integrated with existing orthogonal works which aim at improving FL.
\vspace{-5mm}
\paragraph{Experiment Setup} We target cross-device settings where the devices are resource-constrained, i.e., most of the devices do not have sufficient computational power and memory to support the training of large models. Therefore,  we adopt a compact model with two convolutional layers followed by three fully-connected layers and FedAvg \citep{mcmahan2016communication} as the FL optimizer. The batch size is $50$ for each client. In each communication round, all of them conduct the same number of local updates, which allows the client with the largest local dataset to conduct 5 local training epochs. In our method, we set the $\beta_m=m$, $\gamma=10$ and $L_{b}=10^{-20}$. The local optimizer is SGD with a weight decay of 0.0005. The learning rate is 0.01 initially and the decay factor is 0.9992. We terminate the FL training after 3000 communication rounds and then evaluate the model's performance on the test dataset of CIFAR-10. More details of the experiment setup are listed in Section \ref{Sec:settingsec5}.

\vspace{-5pt}
\subsection{Results for Class-Balanced Global Datasets}\label{sec:RCBGD}

\begin{table*}[ht]
\centering
\renewcommand{\arraystretch}{1}
\setlength\tabcolsep{14.5pt}
\scalebox{1}{
\begin{tabular}{|c|c|c|c|c|c|c|}
\hline
\multicolumn{2}{|l|}{}                                                                          & all            & rand         & pow-d            & Fed-cucb                     & Fed-CBS                       \\ \hline
\multirow{2}{*}{Case 1}                                                            & 3:1 & 55.17$\pm$0.94 & 50.99$\pm$0.97 & 53.51$\pm$0.34 & 55.11$\pm$0.26                      &\textbf{56.86$\pm$0.34}          \\ \cline{2-7} 
                                                                                          & 5:1 & 50.93$\pm$1.64 & 47.36$\pm$2.34 & 52.73$\pm$1.85 &    53.75$\pm$0.58              & \textbf{54.94$\pm$0.73} \\ \hline
\multirow{2}{*}{Case 2}                                                            & 3:1 & 54.01$\pm$0.60 & 50.81$\pm$2.03 & 53.98$\pm$1.87 & 54.48$\pm$1.31 & \textbf{57.71$\pm$0.50}  \\ \cline{2-7} 
                                                                                          & 5:1 & 50.42$\pm$1.27 & 48.33$\pm$3.03 & 53.54$\pm$1.18 & 53.38$\pm$1.48 & \textbf{57.99$\pm$0.46}  \\ \hline
\end{tabular}}
\caption{Best test accuracy for our method and other four baselines. }
\label{imbalancebest}
\vspace{-10pt}
\end{table*}

In this experiment, we set 200 clients in total with a class-balanced global dataset. The non-IID data partition among clients is based on a Dirichlet distribution parameterized by the concentration parameter $\alpha$ in \citet{Hsu2019MeasuringTE}. Roughly speaking, as $\alpha$ decreases, the data distribution will become more non-iid. In each communication round, we uniformly and randomly set 30$\%$ of them (i.e., 60 clients) available and select 10 clients from those 60 available ones to participate in the training. 

As shown in Table \ref{roundqcid1}, our method can achieve the lowest $QCID$ value compared with other client selection strategies. As a benefit of successfully reducing the class-imbalance, our method outperforms the other three baseline methods and achieves comparable performance to the ideal setting where all the available clients are engaged in training. As shown in Table \ref{roundqcid1} and Figure \ref{fig:figcifar1}, our method can achieve faster and more stable convergence. The enhancement in stability can also be perceived as a reduction in gradient variance, a concept that has been explored in previous studies \cite{johnson2013accelerating,zhang2020variance,defazio2014saga,zhao2018selfadversarially,chatterji2018theory}. It is also worth noting that due to the inaccurate distribution estimation and the limitations of the greedy method discussed in Section \ref{sec:relatedwork}, the performance of Fed-cucb is much worse than ours.
\vspace{-5pt}
\subsection{Results for Class-Imbalanced Global Datasets}\label{RCIGD}
In real-world settings, the global dataset of all the clients is not always class-balanced. Hence, we investigate two different cases to show the superiority of our method and provide more details of their settings in Section \ref{Sec:settingCase12}. To simplify the construction of a class-imbalanced global dataset, each client only has one class of data with the same quantity. We report the best test accuracy in Table \ref{imbalancebest} and present the corresponding $QCID$ values in Section \ref{Sec:QCIDcase12}.

\subsubsection{Case 1: Uniform Availability}
\textbf{Settings.} There are 120 clients in total, and the global dataset of these 120 clients is class-imbalanced. To measure the degree of class imbalance, we let the global dataset have the same amount of $n_1$ data samples for five classes and the same amount of $n_2$ data samples for the other five classes. The ratio $r$ between $n_1$ and $n_2$ is respectively set to $3:1$ and $5:1$. In each communication round, we uniformly set 30$\%$ of them (\textit{i.e.}, 36 clients) available with replacement and select 10 clients to participate in the training.

As shown in Table \ref{imbalancebest} and Figure \ref{fig:case1}, our method can achieve faster and more stable convergence, and it even achieves slightly better performance than the ideal setting where all the available clients are engaged. The performance of Fed-cucb \citep{yang2020federated} is better than the results on the class-balanced global dataset, which is partly due to the simplicity of each client's local dataset composition in our experiments. The third line in Figure \ref{RealEstimated} indicates Fed-cucb can accurately estimate this simple type of label distribution.

\begin{figure}[ht]
\centering
  \includegraphics[width=1\linewidth]{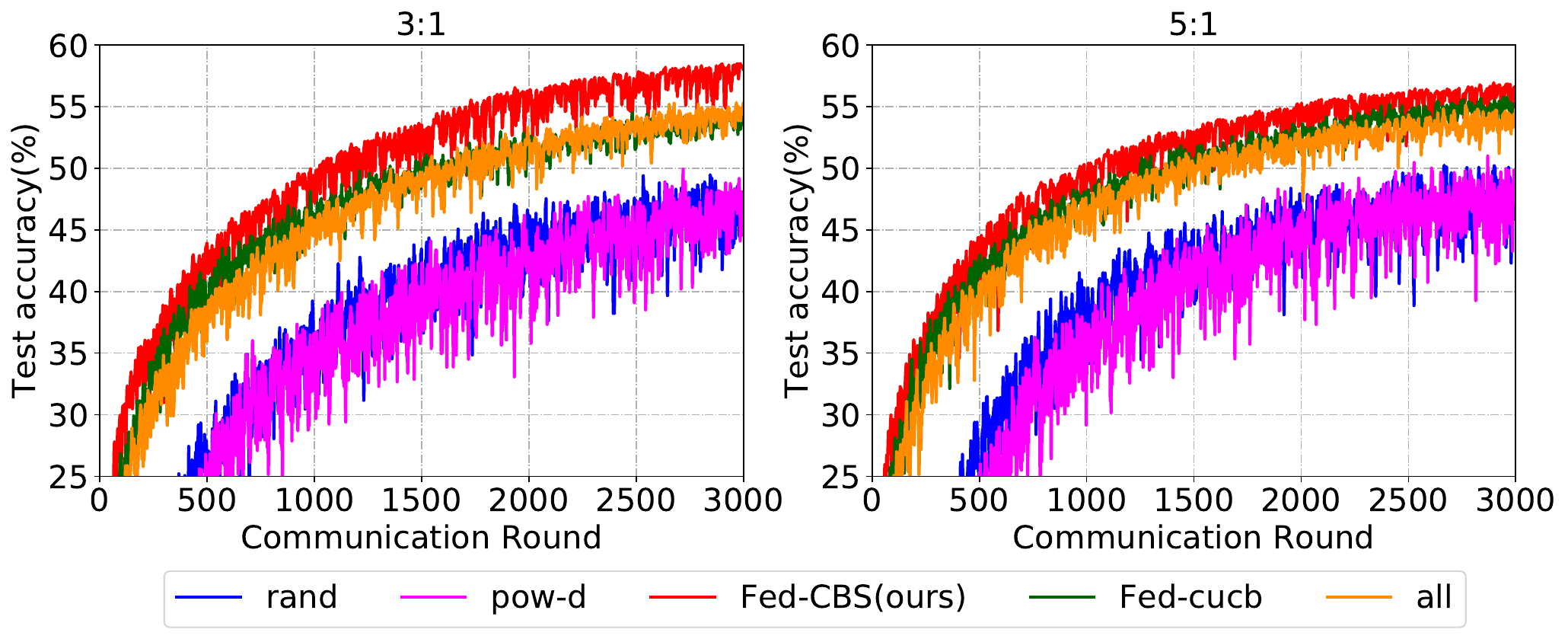}\\
  \vspace{-10pt}
\caption{Test accuracy on Cifar-10 with class-imbalanced global dataset in Case 1. }
\vspace{-10pt}
\label{fig:case1}
\end{figure}

\subsubsection{Case 2: Non-uniform Availability}
\textbf{Settings.} There are 200 clients in total. In each communication round, 30$\%$ of them (\textit{i.e.}, 60 clients) are set available uniformly in each training round with replacement. By non-uniformly setting the availability, the global dataset of those 60 available clients is always class-imbalanced. To measure the degree of class imbalance, we make the global dataset have the same amount of $n_1$ data samples for the five classes and have the same amount of $n_2$ data samples for the other five classes. The ratio $r$ between $n_1$ and $n_2$ is set to $3:1$ and $5:1$. We select 10 clients to participate in the training.

As shown in Table \ref{imbalancebest} and Figure \ref{fig:case1}, our method consistently achieves higher test accuracy and more stable convergence, and it also outperforms the ideal setting where all the available clients are engaged. Since the global dataset of the available 60 clients in each communication round is always class-imbalanced, engaging all of them is not the optimal selection strategy in terms of test accuracy.
\vspace{-10pt}
\begin{figure}[ht]
  \centering
  \includegraphics[width=1\linewidth]{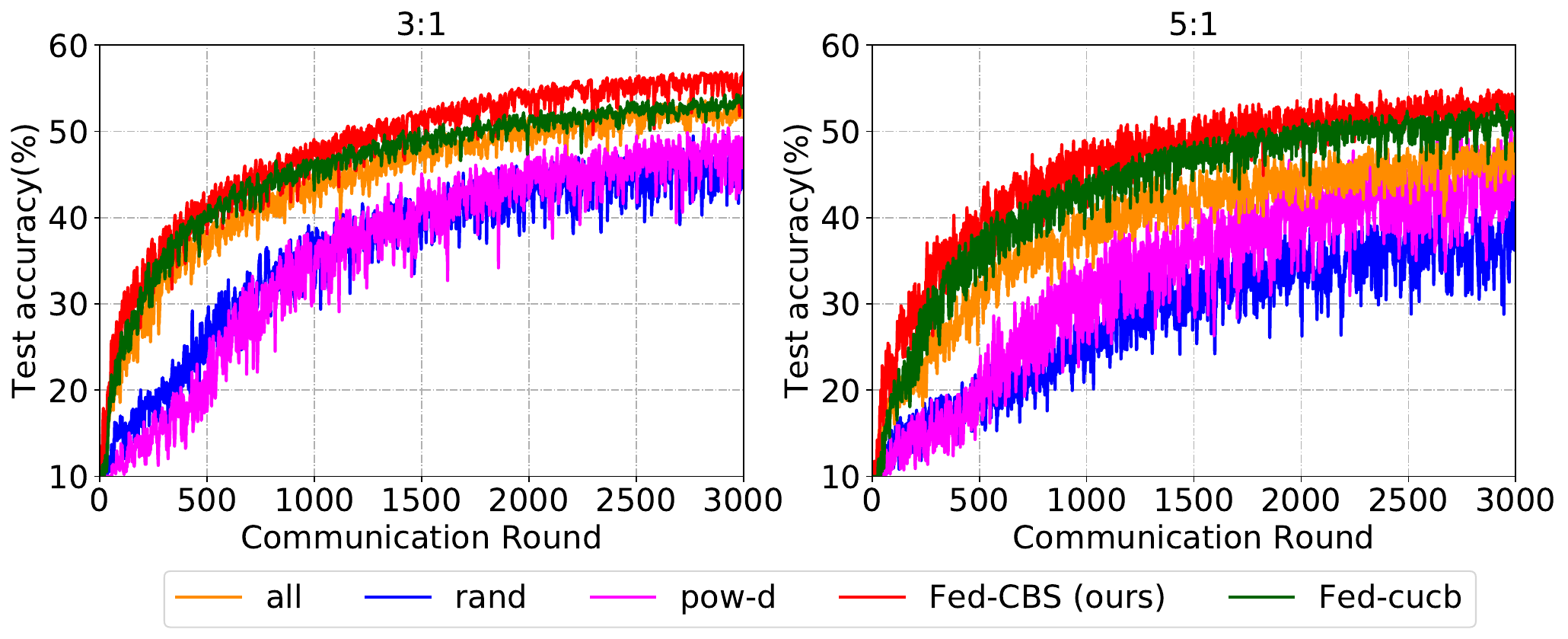}\\
  \vspace{-0.3cm}
\caption{Test accuracy on Cifar-10 with class-imbalanced global dataset in Case 2.}
\label{fig:case2}
\end{figure}
\vspace{-10pt}
\section{Conclusion}\label{sec:conclusion}
We unveil the essential reason for performance degradation on non-IID data with random client selection strategy in FL training, \textit{i.e.}, the class-imbalance. Motivated by this insight, we propose an efficient heterogeneity-aware client sampling mechanism, Fed-CBS. Extensive experiments validate that Fed-CBS significantly outperforms the status quo approaches and yields comparable or even better performance than the ideal setting where all the available clients participate in the training. We also provide the theoretical convergence guarantee of Fed-CBS. Our mechanism has numerous potential applications, including medical classification tasks. In addition, since Fed-CBS is orthogonal to most existing work to improve FL on non-IID data, it can be integrated with them to further improve the performance.

\paragraph{Acknowledgments}
This work is supported in part by the grants CNS-2112562, IIS-2140247, CNS-1822085 and IIS-2223292. We thank Eric Yeats, Shihan Lin and Taoan Huang for the valuable discussion and thank all reviewers for their valuable comments.

\nocite{langley00}

\bibliography{example_paper}
\bibliographystyle{icml2023}

\newpage
\appendix
\onecolumn

\section{Privacy Protection in the framework} \label{selectorprivacy}

\subsection{Proof of Theorem \ref{theoremprivacy}}

\begin{proof}

By the definitions of $\{\bm{\alpha_i}\}$, we define the following matrix $\mA_{\alpha}$

\begin{align*}
    \mA_{\alpha}\triangleq \begin{bmatrix}q_1\bm{\alpha}_1 \\...\\ q_n\bm{\alpha}_n \\ ... \\ q_N\bm{\alpha}_N \end{bmatrix} = \begin{bmatrix}q_1\alpha_{(1,1)}& q_1\alpha_{(1,2)}&...&q_1\alpha_{(1,b)}&... &q_1\alpha_{(1,B)} \\  ...\\  q_n\alpha_{(n,1)}& q_n\alpha_{(n,2)}&...&q_n\alpha_{(n,b)}&... &q_n\alpha_{(n,B)}  \\...\\  q_N\alpha_{(N,1)}& q_N\alpha_{(N,2)}&...&q_N\alpha_{(N,b)}&... &q_N\alpha_{(N,B)}\end{bmatrix} 
\end{align*}

By the definitions of $\mS$, we have 
\begin{equation}\label{StoA}
    \mS= \mA_{\alpha}\cdot \mA^\intercal_{\alpha}
\end{equation}
To derive the exact values of $\{\bm{\alpha_i}\}$ based on $\mS$, we need to solve the problem \ref{StoA}. However, given $\mS$, the $\mA_{\alpha}$ which satisfies $\mS= \mA_{\alpha}\cdot \mA^\intercal_{\alpha}$ is not unique. If $\bar{\mA}_{\alpha}$ is a solution to the problem \ref{StoA}, then for any orthogonal matrix $\mQ$ \textit{i.e.} $\mQ\cdot \mQ^\intercal = \mI$ where the $\mI$ is the identity matrix, the new matrix $\bar{\mA}_{\alpha} \cdot \mQ$ is also solution to the problem \ref{StoA}. This is because 

\begin{align*}
    \bar{\mA}_{\alpha}\cdot \mQ \cdot (\bar{\mA}_{\alpha} \cdot \mQ)^\intercal=\bar{\mA}_{\alpha}\cdot \mQ \cdot \mQ^\intercal \cdot \bar{\mA}^\intercal_{\alpha}=\bar{\mA}_{\alpha}\cdot  \bar{\mA}^\intercal_{\alpha}=\mS
\end{align*}

Hence, the $\mA_{\alpha}$ which satisfies $\mS= \mA_{\alpha}\cdot \mA^\intercal_{\alpha}$ is not unique and we finish our proof.
\end{proof}
To understand the Theorem \ref{theoremprivacy}, we provide the following example. We can conduct the following permutation on the columns of $\mA_\alpha$ (\textit{i.e.} moving the first column to the place before the last column), we can derive a new matrix $\bar{\mA}_\alpha$. 
\begin{align*}
    \bar{\mA}_{\alpha}\triangleq \begin{bmatrix} q_1\alpha_{(1,2)}&...&q_1\alpha_{(1,b)}&... &q_1\alpha_{(1,1)}&q_1\alpha_{(1,B)} \\  ...\\   q_n\alpha_{(n,2)}&...&q_n\alpha_{(n,b)}&... &q_n\alpha_{(n,1)}&q_n\alpha_{(n,B)}  \\...\\   q_N\alpha_{(N,2)}&...&q_N\alpha_{(N,b)}&... &q_N\alpha_{(N,1)}&q_N\alpha_{(N,B)}\end{bmatrix} 
\end{align*}

We can find that $\bar{\mA}_\alpha$ also satisfies $\mS= \bar{\mA}_{\alpha}\cdot \bar{\mA}^\intercal_{\alpha}$. Actually, there are also many other permutations that can derive the solutions to the problem \ref{StoA}. Hence, in our framework shown in \ref{FHE}, the selector can not estimate the exact label distribution of the clients.

\subsection{An Example of Deriving $\mS$ Using FHE}\label{FHEexample}

\begin{figure}[ht]
\begin{center}
\includegraphics[width=1.0\linewidth]{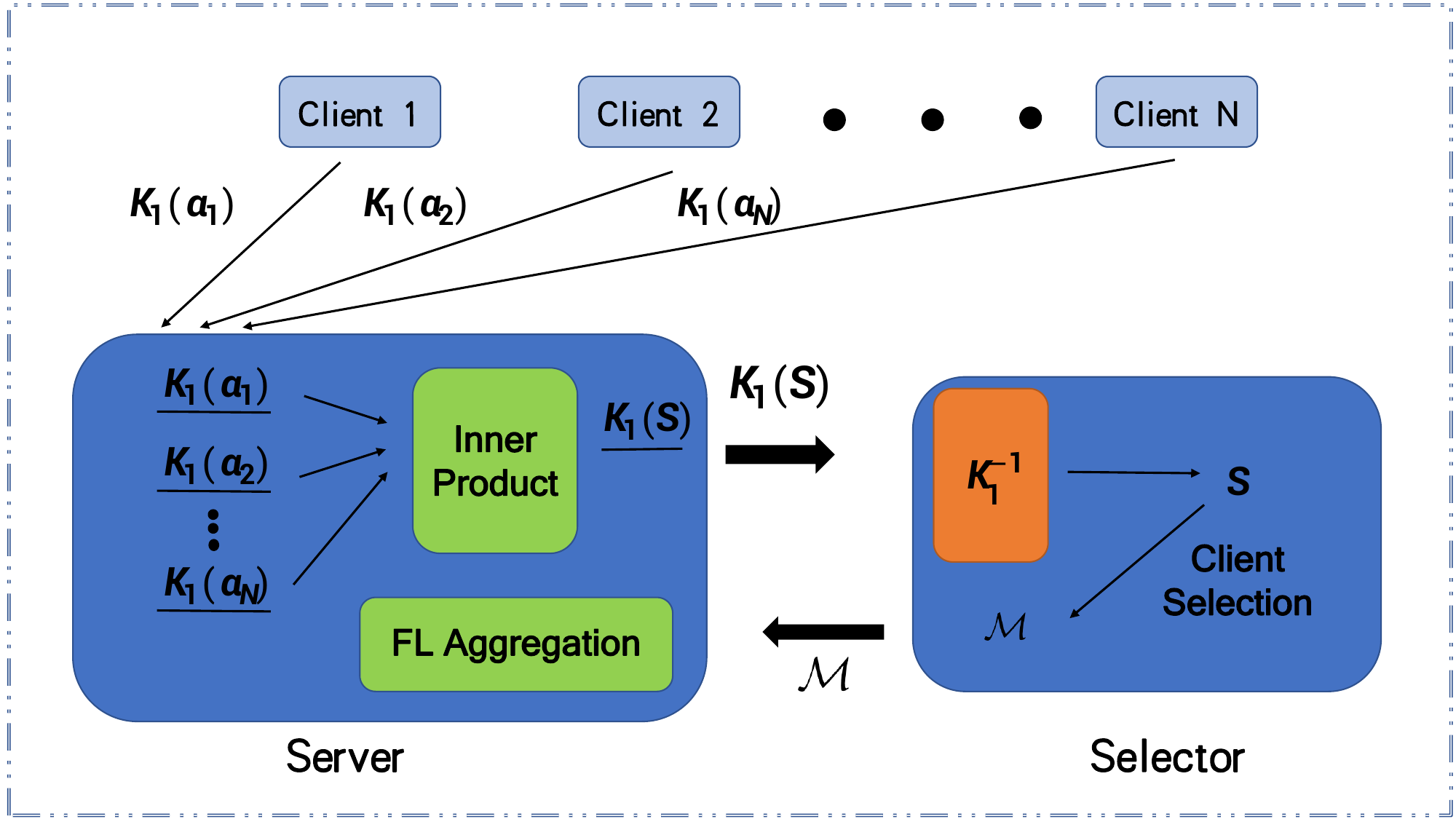}
\end{center}
\vspace{-10pt}
\caption{An example of FHE to securely transmit ${S}$. }
\vspace{-10pt}\label{FHE}
\end{figure}
FHE~\cite{bgvfhe,fvfhe,halevi2015bootstrapping} enables an untrusted party to perform computation (addition and multiplication) on encrypted data. In Figure \ref{FHE}, we provide a framework as an example to show it is possible to derive ${S}$ without knowing the values of local label distributions $\{\bm{\alpha_i}\}$ using FHE. Our framework can be realized using off-the-shelf FHE libraries such as~\cite{seal}. 

There is a selector in our example. It is usually from a third party and keeps a unique private key, denoted by $K_1^{-1}$. The corresponding public key is denoted by $K_1$. In the confidential transmission between server and clients, each client first uses $K_1$ to encrypt their label distribution vector $\bm{\alpha_k}$ as $K_1(\bm{\alpha_k})$, and transmits it to the server. Since only the server has access to $K_1(\bm{\alpha_k})$, no one else including the selector can decrypt it and get $\bm{\alpha_k}$. When the server gets all $K_1(\bm{\alpha_k})$, it will conduct FHE computation to get the matrix $K_1(S)=K_1(\{\bm{\alpha}_i^T\bm{\alpha}_j\}_{ij})=\{K_1(\bm{\alpha}_i)^TK_1(\bm{\alpha}_j)\}_{ij}$. Then the server transmits the $K_1(S)$ to selector, and selector uses $K_1^{-1}$ to access the final result $S$. Since only the selector has $K_1^{-1}$, only it knows $S$. After that, the selector will conduct client selection following some strategy to derive the result $\mathcal{M}$ and transmit it back to the server. At last, the server will collect the model parameters of the clients in $\mathcal{M}$ and conduct FL aggregation. In the whole process, the server, selector or any other clients except client $n$ can not get $\alpha_n$. Furthermore, all clients and servers have no access to the inner product results $S$, which prevents malicious clients or servers from inferring the label distributions of the other clients.

The server, selector or any other clients except client $n$ can not get $\alpha_n$, which protects the privacy of the clients. Furthermore, only the clients have no access to the inner product results $S$, which prevents malicious clients or servers from inferring the label distributions of the other clients. We also prove that it is impossible even for the selector to derive $\{\bm{\alpha_i}\}$ from $S$ with theorem \ref{theoremprivacy}.

\section{Proof of Theorem \ref{theorem1}, \ref{CIR}, \ref{CCR} and \ref{theo:conv}}\label{Sec:proof}
\subsection{Proof of Theorem \ref{theorem1}}
\begin{proof}
\begin{align*}
    \textit{QCID}(\mathcal{M}) &= \sum_{b=1}^B(\frac{\sum_{n\in \mathcal{M} }q_n \alpha_{(n,b)}}{\sum_{n\in \mathcal{M} } q_n}-\frac{1}{B})^2\\
    &=\sum_{b=1}^B(\frac{\sum_{n\in \mathcal{M} }q_n \alpha_{(n,b)}}{\sum_{n\in \mathcal{M} }q_n})^2-2*\frac{1}{\sum_{n\in \mathcal{M} }q_n}*\frac{1}{B}*\sum_{n\in \mathcal{M} }q_n+B*\frac{1}{B^2}\\
    &=\sum_{b=1}^B(\frac{\sum_{n\in \mathcal{M} } q_n\alpha_{(n,b)}}{\sum_{n\in \mathcal{M} }q_n})^2-\frac{1}{B}\\
    &=\frac{1}{(\sum_{n\in \mathcal{M} }q_n)^2}\sum_{n\in \mathcal{M},n^\prime \in \mathcal{M}} q_nq_n^\prime(\sum_{b=1}^B \alpha_{(n,b)} \alpha_{(n^\prime,b)})-\frac{1}{B}\\
    &=\frac{1}{(\sum_{n\in \mathcal{M} }q_n)^2} \sum_{n\in \mathcal{M},n^\prime \in \mathcal{M}} q_nq_n^\prime\bm{\alpha}_n\bm{\alpha}_{n^\prime}^T-\frac{1}{B}
\end{align*}
\end{proof}

\subsection{Proof of Theorem \ref{CIR}}

\begin{proof}

To select $M$ clients from $N$ available clients, there are $\tbinom{N}{M}$ different choices to construct $\mathcal{M}$, denoted by $\mathcal{M}^{(1)}, \mathcal{M}^{(2)}, ...,\mathcal{M}^{(\tbinom{N}{M})}$, respectively. Let $x_i \triangleq \text{\textit{QCID}}(\mathcal{M}^{(i)})$ and $\bar{N}  \triangleq (\tbinom{N}{M})$.
Then we have 
\begin{align*}
    \mathbb{E}_{\mathcal{M}\sim P_{\beta_M}} QCID(\mathcal{M})=x_1\frac{\frac{1}{x_1^{\beta_{M}}}}{\frac{1}{x_1^{\beta_{M}}}+\frac{1}{x_2^{\beta_{M}}}+...+\frac{1}{x_{\bar{N}}^{\beta_{M}}}}+x_2\frac{\frac{1}{x_2^{\beta_{M}}}}{\frac{1}{x_1^{\beta_{M}}}+\frac{1}{x_2^{\beta_{M}}}+...+\frac{1}{x_{\bar{N}}^{\beta_{M}}}}+...+x_{\bar{N}}\frac{\frac{1}{x_{\bar{N}}^{\beta_{M}}}}{\frac{1}{x_1^{\beta_{M}}}+\frac{1}{x_2^{\beta_{M}}}+...+\frac{1}{x_{\bar{N}}^{\beta_{M}}}}
\end{align*}
\begin{align*}
    \text{And }\mathbb{E}_{\mathcal{M}\sim P_{rand}} QCID(\mathcal{M})=\frac{1}{\bar{N}}(x_1+x_2+...+x_{\bar{N}})
\end{align*}

Without loss of generality, we assume $x_1\leq x_2 \leq ...\leq x_{\bar{N}}$ and define the following $y_i$ for the notation simplicity:

\begin{equation}
y_i=
\begin{cases}
\frac{1}{x_i^{\beta_{M}}}& \text{if}\quad 0\leq i \leq \bar{N}\\
\frac{1}{x_{i-\bar{N}}^{\beta_{M}}}& \text{if}\quad \bar{N} < i \leq 2\bar{N}-1
\end{cases}
\end{equation}

Now we calculate the following ratio:

\begin{align*}
    \frac{\mathbb{E}_{\mathcal{M}\sim P_{\beta_M}} QCID(\mathcal{M})}{\mathbb{E}_{\mathcal{M}\sim P_{rand}} QCID(\mathcal{M})}&=\frac{\bar{N}(x_1\frac{1}{x_1^{\beta_{M}}}+x_2\frac{1}{x_2^{\beta_{M}}}+...+x_{\bar{N}}\frac{1}{x_{\bar{N}}^{\beta_{M}}})}{(x_1+x_2+...+x_{\bar{N}})(\frac{1}{x_1^{\beta_{M}}}+\frac{1}{x_2^{\beta_{M}}}+...+\frac{1}{x_{\bar{N}}^{\beta_{M}}})}\\
    &=\frac{\sum_{j=1}^{\bar{N}}(\sum_{i=1}^{\bar{N}}x_i\frac{1}{x_i^{\beta_{M}}})}{\sum_{j=1}^{\bar{N}}(\sum_{i=1}^{\bar{N}}x_iy_{j+i-1})}
\end{align*}

Since we assume that $x_1\leq x_2 \leq ...\leq x_{\bar{N}}$, we have $\frac{1}{x_1^{\beta_{M}}}\geq \frac{1}{x_2^{\beta_{M}}} \geq ...\geq \frac{1}{x_{\bar{N}}^{\beta_{M}}}$. Besides, it is easy to find $x_i$ and $x_{i^\prime}$ satisfying $x_i \neq x_{i^\prime}$. Then for each $1\leq j \leq \bar{N}$, according to the rearrangement inequality, we have 
\begin{align*}
    &\sum_{i=1}^{\bar{N}}x_i\frac{1}{x_i^{\beta_{M}}}< \sum_{i=1}^{\bar{N}}x_iy_{j+i-1}\Rightarrow   \sum_{j=1}^{\bar{N}}\sum_{i=1}^{\bar{N}}x_i\frac{1}{x_i^{\beta_{M}}}< \sum_{j=1}^{\bar{N}}\sum_{i=1}^{\bar{N}}x_iy_{j+i-1} \Rightarrow \\
    &\frac{\mathbb{E}_{\mathcal{M}\sim P_{\beta_M}} QCID(\mathcal{M})}{\mathbb{E}_{\mathcal{M}\sim P_{rand}} QCID(\mathcal{M})}<1 \Rightarrow {\mathbb{E}_{\mathcal{M}\sim P_{\beta_M}} QCID(\mathcal{M})}<{\mathbb{E}_{\mathcal{M}\sim P_{rand}} QCID(\mathcal{M})}
\end{align*}

Similarly, for $\beta_M^\prime$ such that $\beta_M^\prime \geq \beta_M$, denote $\beta_M^\prime=\beta_M+\Delta \beta$. We have 

{\footnotesize 
\begin{align*}
    &\mathbb{E}_{\mathcal{M}\sim P_{\beta_M^\prime}} QCID(\mathcal{M})=\\
    &x_1\frac{\frac{1}{x_1^{\beta_{M}^\prime}}}{\frac{1}{x_1^{\beta_{M}^\prime}}+\frac{1}{x_2^{\beta_{M}^\prime}}+...+\frac{1}{x_{\bar{N}}^{\beta_{M}^\prime}}}+x_2\frac{\frac{1}{x_2^{\beta_{M}^\prime}}}{\frac{1}{x_1^{\beta_{M}^\prime}}+\frac{1}{x_2^{\beta_{M}^\prime}}+...+\frac{1}{x_{\bar{N}}^{\beta_{M}^\prime}}}+...+x_{\bar{N}}\frac{\frac{1}{x_{\bar{N}}^{\beta_{M}^\prime}}}{\frac{1}{x_1^{\beta_{M}^\prime}}+\frac{1}{x_2^{\beta_{M}^\prime}}+...+\frac{1}{x_{\bar{N}}^{\beta_{M}^\prime}}}=\\
    &\frac{x_1\frac{1}{x_1^{\beta_M+\Delta \beta}}}{\frac{1}{x_1^{\beta_M+\Delta \beta}}+\frac{1}{x_2^{\beta_M+\Delta \beta}}+...+\frac{1}{x_{\bar{N}}^{\beta_M+\Delta \beta}}}+\frac{x_2\frac{1}{x_2^{\beta_M+\Delta \beta}}}{\frac{1}{x_1^{\beta_M+\Delta \beta}}+\frac{1}{x_2^{\beta_M+\Delta \beta}}+...+\frac{1}{x_{\bar{N}}^{\beta_M+\Delta \beta}}}+...+\frac{x_{\bar{N}}\frac{1}{x_{\bar{N}}^{\beta_M+\Delta \beta}}}{\frac{1}{x_1^{\beta_M+\Delta \beta}}+\frac{1}{x_2^{\beta_M+\Delta \beta}}+...+\frac{1}{x_{\bar{N}}^{\beta_M+\Delta \beta}}}
\end{align*}}

Now we calculate the following ratio: 
\begin{align*}
    \frac{\mathbb{E}_{\mathcal{M}\sim P_{\beta^{\prime}_M}} QCID(\mathcal{M})}{\mathbb{E}_{\mathcal{M}\sim P_{\beta_M}} QCID(\mathcal{M})}&=\frac{\left(\frac{x_1}{x_1^{\beta_M+\Delta \beta}}+\frac{x_2}{x_2^{\beta_M+\Delta \beta}}+...+\frac{x_N}{x_N^{\beta_M+\Delta \beta}}\right)\left(\frac{1}{x_1^{\beta_{M}}}+\frac{1}{x_2^{\beta_{M}}}+...+\frac{1}{x_{\bar{N}}^{\beta_{M}}}\right)}{\left(\frac{1}{x_1^{\beta_M+\Delta \beta}}+\frac{1}{x_2^{\beta_M+\Delta \beta}}+...+\frac{1}{x_{\bar{N}}^{\beta_M+\Delta \beta}} \right)\left(\frac{x_1}{x_1^{\beta_{M}}}+\frac{x_2}{x_2^{\beta_{M}}}+...+\frac{x_{\bar{N}}}{x_{\bar{N}}^{\beta_{M}}}\right)}\\
    &=\frac{\sum_{1\leq i \leq j\leq N}\frac{1}{x_i^{\beta_{M}}x_j^{\beta_{M}}}\left( \frac{x_i}{x_i^{\Delta \beta}}+\frac{x_j}{x_j^{\Delta \beta}}\right)}{\sum_{1\leq i \leq j\leq N}\frac{1}{x_i^{\beta_{M}}x_j^{\beta_{M}}}\left( \frac{x_i}{x_j^{\Delta \beta}}+\frac{x_j}{x_i^{\Delta \beta}}\right)}
\end{align*}

Since we assume that $x_1\leq x_2 \leq ...\leq x_{\bar{N}}$, we have $\frac{1}{x_1^{\Delta \beta}}\geq \frac{1}{x_2^{\Delta \beta}} \geq ...\geq \frac{1}{x_{\bar{N}}^{\Delta \beta}}$.  Then for each $1\leq i \leq j\leq N$, according to the rearrangement inequality, we have
\begin{align*}
    \frac{x_i}{x_i^{\Delta \beta}}+\frac{x_j}{x_j^{\Delta \beta}} \leq \frac{x_i}{x_j^{\Delta \beta}}+\frac{x_j}{x_i^{\Delta \beta}}
\end{align*}

Furthermore, among all the $(x_i, x_j)$ pairs, it is easy to find one $(x_i, x_{i^\prime})$ such that it satisfies $x_i \neq x_{i^\prime}$. Thus we have 
\begin{align*}
    \frac{x_i}{x_i^{\Delta \beta}}+\frac{x_{i^\prime}}{x_{i^\prime}^{\Delta \beta}} < \frac{x_i}{x_{i^\prime}^{\Delta \beta}}+\frac{x_{i^\prime}}{x_i^{\Delta \beta}}
\end{align*}

Consequently, we have
\begin{align*}
    &\sum_{1\leq i \leq j\leq N}\frac{1}{x_i^{\beta_{M}}x_j^{\beta_{M}}}\left( \frac{x_i}{x_i^{\Delta \beta}}+\frac{x_j}{x_j^{\Delta \beta}}\right)<\sum_{1\leq i \leq j\leq N}\frac{1}{x_i^{\beta_{M}}x_j^{\beta_{M}}}\left( \frac{x_i}{x_j^{\Delta \beta}}+\frac{x_j}{x_i^{\Delta \beta}}\right)\Rightarrow\\
    &\frac{\mathbb{E}_{\mathcal{M}\sim P_{\beta^{\prime}_M}} QCID(\mathcal{M})}{\mathbb{E}_{\mathcal{M}\sim P_{\beta_M}} QCID(\mathcal{M})}<1 \Rightarrow \mathbb{E}_{\mathcal{M}\sim P_{\beta^{\prime}_M}} QCID(\mathcal{M})<\mathbb{E}_{\mathcal{M}\sim P_{\beta_M}} QCID(\mathcal{M})
    \end{align*}

\end{proof}

\subsection{Proof of NP-hardness}\label{nphardness}

We provide the following proof to prove the NP-hardness. First, we need to clarify the definitions of the following three problems.

\textbf{Problem 1}: We need to select M clients among N clients such that the grouped dataset of these M clients is class-balanced. There are B (B 2) classes in total.
Our goal is to prove the NP-hardness of Problem 1.

\textbf{Problem 2}: We need to select $\mathrm{N}$ clients among $2 \mathrm{~N}$ clients such that the group dataset of these N clients is class-balanced. There are 2 classes in total. We denote the distribution of the local dataset of the $n$-th client as $\left[x_n, y_n\right]$, where $x_n$ and $y_n$ are non-negative integers.

Problem 2 is a particular case of Problem 1. If we can prove the NP-hardness of Problem 2, then Problem 1 is also NP-hard.

\textbf{Problem 3 (Partition problem)}: Deciding whether a given multiset $S$ of $K$ positive integers can be partitioned into two subsets $S_1$ and $S_2$ such that the sum of the numbers in $S_1$ equals the sum of the numbers in $S_1$. We denote the $S$ as $\left\{s_1, s_2, \ldots, s_K\right\}$

It is well-known that the Partition problem is an NP-complete problem. Hence the overall idea of our proof is to reduce Problem 2 to Problem 3. Then we can show that Problem 2 is NP-hard.

\begin{proof}

Case 1: We first consider the case where $K$ is an even number, where $K=2 N$.
We denote the sum of all the elements in $S$ as $W$, where $W=s_1+s_2+\ldots+s_K$. We define a new positive value $P$ as 
$$
P=\min\{ \left.\left|2 W-K s_1\right|,\left|2 W-K s_2\right|, \ldots,\left|2 W-K s_K\right|\right\}+1
$$
Now, we define the following non-negative $x_n$ and $y_n$, where $1 \leq n \leq K$
$$
\begin{array}{ll}
x_1=K s_1+P, & y_1=2 W-K s_1+P \\
x_2=K s_2+P, & y_2=2 W-K s_2+P
\end{array}
$$
$$
x_K=K s_K+P, \quad y_K=2 W-K s_K+P
$$
Now we can consider the $\left[x_1, y_1\right],\left[x_2, y_2\right], \ldots,\left[x_K, y_K\right]$ as the class distributions defined in Problem 2 . If Problem 2 is not NP-hard, we can find $N=\frac{K}{2}$ clients among the above $K$ clients such that the grouped dataset is class-balanced within polynomial time complexity. We denote those $N$ clients' distribution as $\left[\bar{x}_1, \bar{y}_1\right],\left[\bar{x}_2, \bar{y}_2\right], \ldots,\left[\bar{x}_K, \bar{y}_N\right]$. Then we denote the corresponding elements in $S$ as $\bar{s}_1, \bar{s}_2, \ldots, \bar{s}_N$.
Since it is class-balanced solution, we have
$$
\bar{x}_1+\bar{x}_2+\ldots+\bar{x}_N=\bar{y}_1+\bar{y}_2+\ldots+\bar{y}_N
$$
By summarizing all the $\bar{x}_n$ and $\bar{y}_n$. we can derive that $\left(\bar{x}_1+\bar{y}_1\right)+\left(\bar{x}_2+\bar{y}_2\right)+\ldots+\left(\bar{x}_N+\bar{y}_N\right)=N(2 W+2 P)$.
Then we have $\bar{x}_1+\bar{x}_2+\ldots+\bar{x}_N=N(W+P)$
According to the definition of $\bar{x}_1, \bar{x}_2, \ldots, \bar{x}_N$, we have
$$
\left(K \bar{s}_1+P\right)+\left(K \bar{s}_2+P\right)+\ldots+\left(K \bar{s}_N+P\right)=N(W+P)
$$
Since $K=2 N$
we have $\bar{s}_1+\bar{s}_2+\ldots+\bar{s}_N=\frac{W}{2}$. This means we can solve the Partition problem within polynomial time complexity when $K$ is an even number.
Case 2: If $K$ is an odd number, where $K=2 N-1$, we can just add an auxiliary element $s \prime=0$ to the original $S$ and derive a new set $S t=S \cup\{s \prime \}$. If Problem 2 is not NP-hard, we can follow the same process as in Case 1 to solve the Partition problem within polynomial time complexity when K is an odd number.

We know these solutions to Case $2 \&$ Case 1 conflict with the fact that the Partition problem is NP-hard. Hence, Problem 2 is NP-hard. Then Problem 1 is NP-hard, and we finish our proof.

\end{proof}

\subsection{Proof of Theorem \ref{CCR}}
\begin{proof}

According to \cite{schneider1989matrices}, we first define the principle submatrix, which is a submatrix where the set of remaining row indices is the same as the remaining set of column indices .

Before selecting the first client, we need to calculate the following value for all clients  $c_1\in \{1,2,3,...,N\}$,
\begin{align*}
    P(C_1=c_1) \propto \frac{1}{[QCID(\mathcal{M}_1)]^{\beta_1}}+\lambda \sqrt{\frac{3 \ln k}{2 T_{c_1}}}, \quad \beta_1>0.~
\end{align*}

To derive the $QCID(\mathcal{M}_1)$ for each  $c_1\in \{1,2,3,...,N\}$, according to Theorem $\ref{theorem1}$, we need to find the principle submatrix of S, denoted by $S_1$, in which the set of column indices is $\mathcal{M}_1$. Then we need to calculate the sum of all the elements in $S_1$. Since there are $N$ different values for $c_1$ and the dimension of $S_1$ is $1\times 1$, we need to conduct the computation for $N$ times.

After selecting $\mathcal{M}_1=\{c_1\}$, we need to select $c_2\in \{1,2,3,...,N\}/\mathcal{M}_1$ to form $\mathcal{M}_2=\mathcal{M}_1 \bigcup \{c_2\}$.

Before selecting the second client, we need to calculate the following value for all the  $\mathcal{M}_2=\{c_1,c_2\}$ where $c_2\in \{1,2,3,...,N\}/\mathcal{M}_1$,

\begin{align*}
    P(C_2=c_2|C_1=c_1) \propto \frac{\frac{1}{[QCID(\mathcal{M}_2)]^{\beta_2}}}{\frac{1}{[QCID(\mathcal{M}_1)]^{\beta_1}}+\alpha \sqrt{\frac{3 \ln k}{2 T_{c_1}}}}
\end{align*}

To derive the $QCID(\mathcal{M}_2)$ for each  $c_2\in \{1,2,3,...,N\}/\{\mathcal{M}_1\}$, according to Theorem $\ref{theorem1}$, we need to find the  principle submatrix of S, denoted by $S_2$, in which the set of column indices is $\mathcal{M}_2$. Then we need to calculate the sum of all the elements in $S_2$. Since there are $N-1$ different values for $c_2$, there will be $N-1$ different $S_2$. Also, because we have already calculate the sum of all the elements in $S_1$, which is a submatrix of $S_2$, in our first step, we now only need to sum over all the other elements in  $S_2$. Since the dimension of $S_2$ is $2\times 2$, we need to do the computation for $(N-1) \times (2^2-1)$ times.

This procedure goes on. After selecting $\mathcal{M}_{m-1}=\{c_1, c_2, ..., c_{m-1}\}$, where $3\leq m \leq M$, we need to select $c_m\in \{1,2,3,...,N\}/\mathcal{M}_m$ to form $\mathcal{M}_m=\mathcal{M}_{m-1} \bigcup \{c_m\}$. Before selecting the $m$-th client, we need to calculate the following value for all the  $\mathcal{M}_m=\{c_1,c_2,..., c_m\}$ where $c_m\in \{1,2,3,...,N\}/\mathcal{M}_{m-1}$,

\begin{align*}
    P(C_m=c_m|C_{m-1}=c_{m-1},...,C_2=c_2,C_1=c_1)
    \propto  \frac{[QCID(\mathcal{M}_{m-1})]^{\beta_{m-1}}}{[QCID(\mathcal{M}_m)]^{\beta_m }}
\end{align*}

To derive the $QCID(\mathcal{M}_{m})$ for each  $c_m\in \{1,2,3,...,N\}/\{\mathcal{M}_{m-1}\}$, according to Theorem $\ref{theorem1}$, we need to find the  principle submatrix of S, denoted by $S_m$, in which the set of column indices is $\mathcal{M}_m$. Then we need to calculate the sum of all the elements in $S_m$. Since there are $N-(m-1)$ different values for $c_m$, there will be $N-(m-1)$ different $S_m$. Since we have already calculate the sum of all the elements in $S_{m-1}$, which is a submatrix of $S_m$, in our previous step, now we only need to sum all the other elements in  $S_m$. Since the dimension of $S_m$ is $m\times m$, we need to conduct the computation for $(N-(m-1)) \times (m^2-(m-1)^2)$ times.

In summary, in our strategy, the total times of computations we need to conduct are 
\begin{align*}
    &N+(N-1) \times (2^2-1)+...+(N-(m-1)) \times (m^2-(m-1)^2)+...+(N-M) \times (M^2-(M-1)^2)\\
    &\leq N+N \times (2^2-1)+...+N \times (m^2-(m-1)^2)+...+ N \times (M^2-(M-1)^2)\\
    &=N\times M^2~,
\end{align*}
which finishes the proof that the computation complexity for our method is $\mathcal{O}\left(N\times M^2\right)$.
\end{proof}

\subsection{Proof of Theorem \ref{theo:conv}}

\begin{proof}
Suppose there are N available clients and their indices are denoted by $\{1,2,3,..,N\}$. Our goal is to get a subset $\mathcal{M}$ of $\{1,2,3,..,N\}$ following the probability law $S$ of some client selection strategy. Let $\boldsymbol{w}_{n}^{(k, t)}$ denote the model parameter of client $n$ after $t$ local updates in the $k$-th communication round and $\boldsymbol{w}^{(k, 0)}$ denote the global model parameter at the beginning of the $k$-th communication round. According to the proof of Theorem 1 in \cite{wang2020tackling}, we can define the following auxiliary variables for the setting where we adopt FedAvg as the FL optimizer and all the client conduct $\tau$ local updates in each communication round $k$:

Normalized Stochastic Gradient: $\quad \boldsymbol{d}_{n}^{(k)}=\frac{1}{\tau}\sum_{k=0}^{\tau-1} g_{n}\left(\boldsymbol{w}_{n}^{(k, t)}\right)$,

Normalized Gradient:  $\boldsymbol{h}_{n}^{(k)}=\frac{1}{\tau} \sum_{k=0}^{\tau-1} \nabla F_{n}\left(\boldsymbol{w}_{n}^{(k, t)}\right)$.

Normalized Class-wise Gradient:  $\boldsymbol{h}_{(n,b)}^{(k)}=\frac{1}{\tau} \sum_{k=0}^{\tau-1} \nabla F_{(n,b)}\left(\boldsymbol{w}_{n}^{(k, t)}\right)$.

It is easy to verify that $\boldsymbol{h}_{n}^{(k)}=\sum_{b=1}^B\alpha_{(n,b)}\boldsymbol{h}_{(n,b)}^{(k)}$.

According to the proof of Theorem 1 in \cite{wang2020tackling}, one can show that $\mathbb{E}\left[\boldsymbol{d}_{n}^{(k)}-\boldsymbol{h}_{n}^{(k)}\right]=0$. Besides, since clients are independent to each other, we have $\mathbb{E}\left\langle\boldsymbol{d}_{n}^{(k)}-\boldsymbol{h}_{n}^{(k)}, \boldsymbol{d}_{n^\prime}^{(k)}-\boldsymbol{h}_{n^\prime}^{(k)}\right\rangle=0, \forall n \neq n^\prime.$ Recall that the update rule of the global model can be written as follows:
$$
\boldsymbol{w}^{(k+1, 0)}-\boldsymbol{w}^{(k, 0)}=-\eta \frac{\sum_{n\in \mathcal{M} }q_n \boldsymbol{d}_{n}^{(k)}}{\sum_{n\in \mathcal{M} } q_n},
$$
where $\eta$ is the learning rate.
According to the Lipschitz-smooth assumption for the global objective function $\widetilde{F}$ (Asssumption \ref{ass1}), it follows that 
\begin{align}
& \mathbb{E}\left[\widetilde{F}\left(\boldsymbol{w}^{(k+1, 0)}\right)\right]-\widetilde{F}\left(\boldsymbol{w}^{(k, 0)}\right) \nonumber\\
\leq &- \eta \underbrace{\mathbb{E}\left[\left\langle\nabla \widetilde{F}\left(\boldsymbol{w}^{(k, 0)}\right), \frac{\sum_{n\in \mathcal{M} }q_n \boldsymbol{d}_{n}^{(k)}}{\sum_{n\in \mathcal{M} } q_n}\right\rangle\right]}_{T_{1}}+\frac{\eta^{2} L_{\widetilde{F}}}{2} \underbrace{\mathbb{E}\left[\left\| \frac{\sum_{n\in \mathcal{M} }q_n \boldsymbol{d}_{n}^{(k)}}{\sum_{n\in \mathcal{M} } q_n}\right\|^{2}\right]}_{T_{2}}
\label{proof:T1T2}
\end{align}
where the expectation is taken over randomly selected indices set $\mathcal{M}$ as well as mini-batches $\xi_{i}^{(k, t)}, \forall n \in$ $\{1,2, \ldots, m\}, t \in\left\{0,1, \ldots, \tau-1\right\}$

Similar to the proof in \cite{wang2020tackling}, to bound the $T_1$ in (\ref{proof:T1T2}), we should notice that
\begin{align}
T_{1} &=\mathbb{E}\left[\left\langle\nabla \widetilde{F}\left(\boldsymbol{w}^{(k, 0)}\right), \frac{\sum_{n\in \mathcal{M} }q_n \left(\boldsymbol{d}_{n}^{(k)}-\boldsymbol{h}_{n}^{(k)}\right)}{\sum_{n\in \mathcal{M} } q_n}\right\rangle\right]+\mathbb{E}\left[\left\langle\nabla \widetilde{F}\left(\boldsymbol{w}^{(k, 0)}\right), \frac{\sum_{n\in \mathcal{M} }q_n \boldsymbol{h}_{n}^{(k)}}{\sum_{n\in \mathcal{M} } q_n}\right\rangle\right] \nonumber \\
&=\mathbb{E}\left[\left\langle\nabla \widetilde{F}\left(\boldsymbol{w}^{(k, 0)}\right), \frac{\sum_{n\in \mathcal{M} }q_n \boldsymbol{h}_{n}^{(k)}}{\sum_{n\in \mathcal{M} } q_n}\right\rangle\right] \nonumber\\
&=\frac{1}{2}\left\|\nabla \widetilde{F}\left(\boldsymbol{w}^{(k,0)}\right)\right\|^{2}+\frac{1}{2} \mathbb{E}\left[\left\|\frac{\sum_{n\in \mathcal{M} }q_n \boldsymbol{h}_{n}^{(k)}}{\sum_{n\in \mathcal{M} } q_n}\right\|^{2}\right]-\frac{1}{2} \mathbb{E}\left[\left\|\nabla \widetilde{F}\left(\boldsymbol{w}^{(k, 0)}\right)-\frac{\sum_{n\in \mathcal{M} }q_n \boldsymbol{h}_{n}^{(k)}}{\sum_{n\in \mathcal{M} } q_n}\right\|^{2}\right]
\label{proofT1}
\end{align}
where the last equation uses the fact: $2\langle a, b\rangle=\|a\|^{2}+\|b\|^{2}-\|a-b\|^{2}$.

$T_2$ is similar as the one in \cite{wang2020tackling}. According to the proof in Section C.3 of \cite{wang2020tackling} , we have the following bound for $T_2$, 
\begin{align} 
T_{2}\leq &2 \sigma^{2}\mathbb{E} \frac{\sum_{n\in \mathcal{M} }q^2_n }{(\sum_{n\in \mathcal{M} } q_n)^2}+2 \mathbb{E}\left[\left\|\frac{\sum_{n\in \mathcal{M} }q_n \boldsymbol{h}_{n}^{(k)}}{\sum_{n\in \mathcal{M} } q_n}\right\|^{2}\right] \nonumber \\
\leq &2\sigma^{2}+2 \mathbb{E}\left[\left\|\frac{\sum_{n\in \mathcal{M} }q_n \boldsymbol{h}_{n}^{(k)}}{\sum_{n\in \mathcal{M} } q_n}\right\|^{2}\right]
\label{proofT2}
\end{align}
Plugging (\ref{proofT1}) and (\ref{proofT2}) back into (\ref{proof:T1T2}), we have 

\begin{align}
    & \mathbb{E}\left[\widetilde{F}\left(\boldsymbol{w}^{(k+1, 0)}\right)\right]-\widetilde{F}\left(\boldsymbol{w}^{(k, 0)}\right) \nonumber\\
\leq &- \eta \underbrace{\mathbb{E}\left[\left\langle\nabla \widetilde{F}\left(\boldsymbol{w}^{(k, 0)}\right), \frac{\sum_{n\in \mathcal{M} }q_n \boldsymbol{d}_{n}^{(k)}}{\sum_{n\in \mathcal{M} } q_n}\right\rangle\right]}_{T_{1}}+\frac{\eta^{2} L_{\widetilde{F}}}{2} \underbrace{\mathbb{E}\left[\left\| \frac{\sum_{n\in \mathcal{M} }q_n \boldsymbol{d}_{n}^{(k)}}{\sum_{n\in \mathcal{M} } q_n}\right\|^{2}\right]}_{T_{2}}\nonumber\\
\leq& -\frac{1}{2}\eta\left\|\nabla \widetilde{F}\left(\boldsymbol{w}^{(k,0)}\right)\right\|^{2}-\frac{1}{2} \eta\mathbb{E}\left[\left\|\frac{\sum_{n\in \mathcal{M} }q_n \boldsymbol{h}_{n}^{(k)}}{\sum_{n\in \mathcal{M} } q_n}\right\|^{2}\right]+\frac{1}{2}\eta \mathbb{E}\left[\left\|\nabla \widetilde{F}\left(\boldsymbol{w}^{(k, 0)}\right)-\frac{\sum_{n\in \mathcal{M} }q_n \boldsymbol{h}_{n}^{(k)}}{\sum_{n\in \mathcal{M} }q_n}\right\|^{2}\right]\nonumber\\
+&\eta^2L_{\widetilde{F}}\sigma^{2}+\eta^2L_{\widetilde{F}}\mathbb{E}\left[\left\|\frac{\sum_{n\in \mathcal{M} }q_n \boldsymbol{h}_{n}^{(k)}}{\sum_{n\in \mathcal{M} } q_n}\right\|^{2}\right]
\end{align}
If we set $\eta\leq \frac{1}{2L}$, we have 
\begin{align}
    & \mathbb{E}\left[\widetilde{F}\left(\boldsymbol{w}^{(k+1, 0)}\right)\right]-\widetilde{F}\left(\boldsymbol{w}^{(k, 0)}\right) \nonumber\\
\leq& -\frac{1}{2}\eta\left\|\nabla \widetilde{F}\left(\boldsymbol{w}^{(k,0)}\right)\right\|^{2}+\frac{1}{2}\eta \mathbb{E}\left[\left\|\nabla \widetilde{F}\left(\boldsymbol{w}^{(k, 0)}\right)-\frac{\sum_{n\in \mathcal{M} }q_n \boldsymbol{h}_{n}^{(k)}}{\sum_{n\in \mathcal{M} }q_n}\right\|^{2}\right]+\eta^2L_{\widetilde{F}}\sigma^{2}.
\label{proof:middle}
\end{align}
Now we focus on the $\mathbb{E}\left[\left\|\nabla \widetilde{F}\left(\boldsymbol{w}^{(k, 0)}\right)-\frac{\sum_{n\in \mathcal{M} }q_n \boldsymbol{h}_{n}^{(k)}}{\sum_{n\in \mathcal{M} }q_n}\right\|^{2}\right]$ in the following:
\begin{align}
    &\mathbb{E}\left\|\nabla  \widetilde{F}\left(\boldsymbol{w}^{(k, 0)}\right)-\frac{\sum_{n\in \mathcal{M} }q_n \boldsymbol{h}_{n}^{(k)}}{\sum_{n\in \mathcal{M} }q_n}\right\|^{2}=\mathbb{E}\left\|\frac{1}{B}\sum_{b=1}^B \nabla\widetilde{F}_b\left(\boldsymbol{w}^{(k, 0)}\right)-\frac{\sum_{n\in \mathcal{M} }q_n\left(\sum_{b=1}^B\alpha_{(n,b)}\boldsymbol{h}_{(n,b)}^{(k)}\right)}{\sum_{n\in \mathcal{M} }q_n}\right\|^2\nonumber\\
    &=\mathbb{E}\left\|\frac{1}{B}\sum_{b=1}^B \nabla\widetilde{F}_b\left(\boldsymbol{w}^{(k, 0)}\right)-\sum_{b=1}^B\frac{\sum_{n\in \mathcal{M} }q_n\alpha_{(n,b)}\boldsymbol{h}_{(n,b)}^{(k)}}{\sum_{n\in \mathcal{M} }q_n}\right\|^2\nonumber\\
    &\leq2\mathbb{E}\left\|\frac{1}{B}\sum_{b=1}^B \nabla\widetilde{F}_b\left(\boldsymbol{w}^{(k, 0)}\right)-\sum_{b=1}^B\frac{\sum_{n\in \mathcal{M} }q_n\alpha_{(n,b)}\nabla\widetilde{F}_b\left(\boldsymbol{w}^{(k, 0)}\right)}{\sum_{n\in \mathcal{M} }q_n}\right\|^2\nonumber\\
    &+2\mathbb{E}\left\|\sum_{b=1}^B\frac{\sum_{n\in \mathcal{M} }q_n\alpha_{(n,b)}\nabla\widetilde{F}_b\left(\boldsymbol{w}^{(k, 0)}\right)}{\sum_{n\in \mathcal{M} }q_n}-\sum_{b=1}^B\frac{\sum_{n\in \mathcal{M} }q_n\alpha_{(n,b)}\boldsymbol{h}_{(n,b)}^{(k)}}{\sum_{n\in \mathcal{M} }q_n}\right\|^2\nonumber\\
    &=2\underbrace{\mathbb{E}\left\|\sum_{b=1}^B (\frac{1}{B}-\frac{\sum_{n\in \mathcal{M} }q_n\alpha_{(n,b)}}{\sum_{n\in \mathcal{M} }q_n})\nabla\widetilde{F}_b\left(\boldsymbol{w}^{(k, 0)}\right)\right\|^2}_{T_3}+2\underbrace{\mathbb{E}\left\|\sum_{b=1}^B\frac{\sum_{n\in \mathcal{M} }q_n\alpha_{(n,b)}[\nabla\widetilde{F}_b\left(\boldsymbol{w}^{(k, 0)}\right)-\boldsymbol{h}_{(n,b)}^{(k)}]}{\sum_{n\in \mathcal{M} }q_n} \right\|^2}_{T_4}
    \label{proof:T3T4}
\end{align}
For $T_3$, according to the Cauchy-Schwarz inequality and Assumption \ref{ass3}, we have 
\begin{align}
    \mathbb{E}\left\|\sum_{b=1}^B (\frac{1}{B}-\frac{\sum_{n\in \mathcal{M} }q_n\alpha_{(n,b)}}{\sum_{n\in \mathcal{M} }q_n})\nabla\widetilde{F}_b\left(\boldsymbol{w}^{(k, 0)}\right)\right\|^2 &\leq  B\mathbb{E}\left[\sum_{b=1}^B (\frac{1}{B}-\frac{\sum_{n\in \mathcal{M} }q_n\alpha_{(n,b)}}{\sum_{n\in \mathcal{M} }q_n})^2\left\|\frac{1}{B}\sum_{b=1}^B \nabla\widetilde{F}_b\left(\boldsymbol{w}^{(k, 0)}\right)\right\|^2\right]\nonumber\\
    &=B\|\nabla\widetilde{F}\left(\boldsymbol{w}^{(k, 0)}\right)\|^2\mathbb{E}[QCID(\mathcal{M})]+\gamma^2\mathbb{E}[QCID(\mathcal{M})]
    \label{proof:qcid}
\end{align}
For $T_4$, we have 
\begin{align}
    &\mathbb{E}\left\|\sum_{b=1}^B\frac{\sum_{n\in \mathcal{M} }q_n\alpha_{(n,b)}[\nabla\widetilde{F}_b\left(\boldsymbol{w}^{(k, 0)}\right)-\boldsymbol{h}_{(n,b)}^{(k)}]}{\sum_{n\in \mathcal{M} }q_n} \right\|^2\nonumber\\
    &=\mathbb{E}\left\|\sum_{b=1}^B\frac{\sum_{n\in \mathcal{M} }q_n\alpha_{(n,b)}[\nabla\widetilde{F}_b\left(\boldsymbol{w}^{(k, 0)}\right)-\nabla F_{(n,b)}\left(\boldsymbol{w}^{(k, 0)}\right)+\nabla F_{(n,b)}\left(\boldsymbol{w}^{(k, 0)}\right)-\boldsymbol{h}_{(n,b)}^{(k)}]}{\sum_{n\in \mathcal{M} }q_n} \right\|^2\nonumber\\
    &\leq 2\mathbb{E}\left\|\sum_{b=1}^B\frac{\sum_{n\in \mathcal{M} }q_n\alpha_{(n,b)}[\nabla\widetilde{F}_b\left(\boldsymbol{w}^{(k, 0)}\right)-\nabla F_{(n,b)}\left(\boldsymbol{w}^{(k, 0)}\right)]}{\sum_{n\in \mathcal{M} }q_n} \right\|^2\\
    &+2\mathbb{E}\left\|\sum_{b=1}^B\frac{\sum_{n\in \mathcal{M} }q_n\alpha_{(n,b)}[\nabla F_{(n,b)}\left(\boldsymbol{w}^{(k, 0)}\right)-\boldsymbol{h}_{(n,b)}^{(k)}
]}{\sum_{n\in \mathcal{M} }q_n} \right\|^2\nonumber\\
    &\leq 2\kappa^2 \sum_{b=1}^B\frac{\sum_{n\in \mathcal{M} }q_n\alpha_{(n,b)}}{\sum_{n\in \mathcal{M} }q_n}+2\left\|\sum_{b=1}^B\frac{\sum_{n\in \mathcal{M} }q_n\alpha_{(n,b)}[\nabla F_{(n,b)}\left(\boldsymbol{w}^{(k, 0)}\right)-\boldsymbol{h}_{(n,b)}^{(k)}
]}{\sum_{n\in \mathcal{M} }q_n} \right\|^2\nonumber\\
&\leq 2\kappa^2 +2\left\|\sum_{b=1}^B\frac{\sum_{n\in \mathcal{M} }q_n\alpha_{(n,b)}[\nabla F_{(n,b)}\left(\boldsymbol{w}^{(k, 0)}\right)-\boldsymbol{h}_{(n,b)}^{(k)}
]}{\sum_{n\in \mathcal{M} }q_n} \right\|^2,
\label{proof:last-2}
\end{align}
where where $\kappa=max_{\{n,b\}}\kappa_{\{n,b\}}$. According to the results from the proof in C.5 in \cite{wang2020tackling}, we have 
\begin{align}
    &\mathbb{E}\left\|\nabla F_{(n,b)}\left(\boldsymbol{w}^{(k, 0)}\right)-\boldsymbol{h}_{(n,b)}^{(k)}\right\|^2 \leq \frac{L_{n,b}^{2}}{\tau} \sum_{k=0}^{\tau-1}  \mathbb{E}\left[\left\|\boldsymbol{w}^{(k, 0)}-\boldsymbol{w}_{n}^{(k, t)}\right\|^{2}\right] \nonumber\\
    &\leq \frac{L^{2}}{\tau} \sum_{k=0}^{\tau-1}  \mathbb{E}\left[\left\|\boldsymbol{w}^{(k, 0)}-\boldsymbol{w}_{n}^{(k, t)}\right\|^{2}\right] \nonumber\\
    &\leq \frac{2 \eta^{2} L^{2} \sigma^{2}}{1-D}\left(\tau-1\right)+\frac{D}{1-D} \mathbb{E}\left[\left\|\nabla F_{i}\left(\boldsymbol{w}^{(k, 0)}\right)\right\|^{2}\right]\nonumber\\
    &\leq \frac{2 \eta^{2} L^{2} \sigma^{2}}{1-D}\left(\tau-1\right)+\frac{2D}{1-D} \left\|\nabla  \widetilde{F}\left(\boldsymbol{w}^{(k, 0)}\right)\right\|^{2}+\frac{2D}{1-D}\mathbb{E}\left[\left\|\nabla F_{i}\left(\boldsymbol{w}^{(k, 0)}\right)-\nabla  \widetilde{F}\left(\boldsymbol{w}^{(k, 0)}\right)\right\|^{2}\right]\nonumber\\
    &\leq \frac{2 \eta^{2} L^{2} \sigma^{2}}{1-D}\left(\tau-1\right)+\frac{2D}{1-D} \left\|\nabla  \widetilde{F}\left(\boldsymbol{w}^{(k, 0)}\right)\right\|^{2}+\frac{2D}{1-D}\mathbb{E}\left[\left\|\nabla F_{i}\left(\boldsymbol{w}^{(k, 0)}\right)-\nabla  \widetilde{F}\left(\boldsymbol{w}^{(k, 0)}\right)\right\|^{2}\right]\nonumber\\
    &\leq \frac{2 \eta^{2} L^{2} \sigma^{2}}{1-D}\left(\tau-1\right)+\frac{2D}{1-D} \left\|\nabla  \widetilde{F}\left(\boldsymbol{w}^{(k, 0)}\right)\right\|^{2}+\frac{1}{B}\frac{2D}{1-D}\kappa^2
    \label{proof:boundT4sub}
\end{align}
where $L=max_{\{n,b\}}L_{n,b}$ and $D=4\eta^2L^2\tau(\tau-1)$.

Combining the results in  (\ref{proof:middle}), (\ref{proof:T3T4}), (\ref{proof:qcid}), (\ref{proof:last-2}) and (\ref{proof:boundT4sub}), it is easy to derive that 
\begin{align}
    &\mathbb{E}\left[\widetilde{F}\left(\boldsymbol{w}^{(k+1, 0)}\right)\right]-\widetilde{F}\left(\boldsymbol{w}^{(k, 0)}\right) \nonumber\\
\leq& -(\frac{1}{2}-B\delta\mathbb{E}[QCID]-\frac{4D}{1-D})\eta\left\|\nabla  \widetilde{F}\left(\boldsymbol{w}^{(k, 0)}\right)\right\|^{2}+4\eta\kappa^2+\frac{4 \eta^{3} L^{2} \sigma^{2}}{1-D}\left(\tau-1\right)\nonumber\\
&+\frac{1}{B}\frac{4D}{1-D}\eta\kappa^2+\eta^2L_{\widetilde{F}}\sigma^{2}+\gamma^2\eta\mathbb{E}[QCID]
\end{align}

Now we have
\begin{align}
    &\frac{\mathbb{E}\left[\widetilde{F}\left(\boldsymbol{w}^{(k+1, 0)}\right)\right]-\widetilde{F}\left(\boldsymbol{w}^{(k, 0)}\right)}{\eta}\nonumber\\
\leq& -(\frac{1}{2}-B\delta\mathbb{E}[QCID]-\frac{4D}{1-D})\left\|\nabla  \widetilde{F}\left(\boldsymbol{w}^{(k, 0)}\right)\right\|^{2}+4\kappa^2\nonumber \\
&+\frac{4 \eta^{2} L^{2} \sigma^{2}}{1-D}\left(\tau-1\right)+\frac{1}{B}\frac{4D}{1-D}\kappa^2+\gamma^2\mathbb{E}[QCID]+\eta L_{\widetilde{F}}\sigma^{2}
\end{align}

Taking the total expectation and averaging over all rounds, one can obtain
\begin{align}
    &\frac{\mathbb{E}\left[\widetilde{F}\left(\boldsymbol{w}^{(K,0)}\right)\right]-\widetilde{F}\left(\boldsymbol{x}^{(0, 0)}\right)}{\eta K}\leq-(\frac{1}{2}-B\delta\mathbb{E}[QCID]-\frac{4D}{1-D})\frac{1}{K}\sum_{t=1}^{K-1}\mathbb{E}\left\|\nabla  \widetilde{F}\left(\boldsymbol{w}^{(k, 0)}\right)\right\|^{2} \nonumber\\
& \gamma^2\mathbb{E}[QCID]+(4+\frac{1}{B}\frac{4D}{1-D})\kappa^2+\frac{4 \eta^{2} L^{2} \sigma^{2}}{1-D}\left(\tau-1\right)+\eta L\sigma^{2}
\end{align}

Finally, we have
\begin{align}
\min _{k \leq K}\left\|\nabla F\left(\boldsymbol{w}^{(k, 0)}\right)\right\|^{2}\leq
    \frac{1}{K}\sum_{k=1}^{K-1}\mathbb{E}\left\|\nabla  \widetilde{F}\left(\boldsymbol{w}^{(k, 0)}\right)\right\|^{2}&\leq \frac{1}{(\frac{1}{2}-B\delta\mathbb{E}[QCID(\mathcal{M})]-\frac{4D}{1-D})}[\frac{\widetilde{F}\left(\boldsymbol{w}^{(0, 0)}\right)-\widetilde{F}_{min}}{\eta K}\nonumber\\
    +(4+\frac{1}{B}\frac{4D}{1-D})\kappa^2
    &+\frac{4 \eta^{2} L^{2} \sigma^{2}}{1-D}\left(\tau-1\right)+\eta L_{\widetilde{F}}\sigma^{2}+\gamma^2\mathbb{E}[QCID]]
\end{align}

If setting $\eta=\frac{s}{10L\sqrt{\tau(\tau-1)K}}$ with $s<1$, we have 
\begin{align}
    \min _{k \leq K}\left\|\nabla F\left(\boldsymbol{w}^{(k, 0)}\right)\right\|^{2}&\leq \frac{1}{\frac{1}{3}-B\delta\mathbb{E}[QCID(\mathcal{M})]}[\frac{\widetilde{F}\left(\boldsymbol{w}^{(0, 0)}\right)-\widetilde{F}_{min}}{s \sqrt{K}/(10L\sqrt{\tau(\tau-1)})}\nonumber\\
    &+5\kappa^2+\frac{\sigma^2s^2}{25\tau K}+\frac{sL_{\widetilde{F}}\sigma^2}{10L\sqrt{\tau(\tau-1)K}}+\gamma^2\mathbb{E}[QCID]]
\end{align}
Since $\widetilde{F}$ is larger than 0, $F_{min}>0$. Now we let $\boldsymbol{w}^{(k)}$ denote the global model parameter at the $k$-th communication round and $\boldsymbol{w}^{(0)}$ denote the initial parameter. After changing the notations, we can finish our proof by the following:
\begin{align*}
    &\min _{k \leq K}\left\|\nabla \widetilde{F}\left(\boldsymbol{w}^{(k)}\right)\right\|^{2} \leq \frac{1}{\frac{1}{3}-B\delta\mathbb{E}[QCID(\mathcal{M})]}[\frac{\sigma^2s^2}{25\tau K}+\frac{sL_{\widetilde{F}}\sigma^2}{10L\sqrt{\tau(\tau-1)K}}\nonumber\\
    &+5\kappa^2+\frac{10L\sqrt{\tau(\tau-1)}\widetilde{F}\left(\boldsymbol{w}^{(0)}\right)}{s \sqrt{K}}+\gamma^2\mathbb{E}[QCID]],
\end{align*}

\end{proof}

\section{Supplemental Experiment Settings and Results}

\subsection{The Experimental Settings in Section \ref{CImotivation}}\label{setting2.1}
 We adopt an MLP model with one hidden layer of 64 units and FedAvg \cite{mcmahan2016communication} as the FL optimizer. 
 In Figure \ref{fig:fig1-sub-first}, we allocate the MNIST data to $N = 100$ clients with each client only accessing to the same amount of data from one class. In Figure \ref{fig:fig1-sub-second}, each client is associated with the same amount of data from two classes. In Figure \ref{fig:fig1-sub-third} and \ref{fig:fig1-sub-fourth}, we first allocate the whole MNIST dataset to $N = 200$ clients and pick 100 to construct a class-imbalanced global dataset. The global dataset with the 100 clients has the same amount of $n_1$ data samples for five classes and has the same amount of $n_2$ data samples for the other five classes. The ration r between $n_1$ and $n_2$ is set to $3:1$.
 
  In each training round (communication round), all of the clients conduct 5 local training epochs. The batch size is $50$ for each client. The local optimizer is SGD with a weight decay of 0.0005. The learning rate is  0.01 initially and the decay factor is 0.9992. We terminate the FL training after 200 training rounds (communication rounds) and then evaluate the model's performance on the test dataset of MNIST. 
 
 \subsection{Additional Experimental Settings in Section \ref{Sec:experiments} }\label{Sec:settingsec5}
 
The model we adopt has two convolutional layers with the number of kernels being 6 and 16, respectively. And all convolution kernels are of size 5 × 5. The outputs of convolutional layers are fed into two hidden layers  with 120 and 84 units. 

In our implementation of Power-of-choice selection strategy (pow-d)\cite{Cho2020ClientSI}, we first sample a candidate set $\mathcal{A}$ of 20 clients without replacement such that client $n$ is chosen with probability proportional to the size of their local dataset $q_n$. Then the server sends the current global model to the clients in set $\mathcal{A}$, and these clients compute and send back to the server their local loss. To derive $\mathcal{M}$, we select M clients who have the highest loss from  $\mathcal{A}$.

In our implementation of the method in \cite{yang2020federated} (Fed-cucb), the exploration factor to balance the trade-off between exploitation and exploration is set as  0.2 and the forgetting factor  as 0.99, which is the same as the settings in \cite{yang2020federated}.

With the help of FHE, we can derive the matrix of inner products $S$ accurately. Hence, in the simulation of our method, Fed-CBS, we ignore the process of deriving $S$ and focus on our sampling strategy.

\subsection{Additional Details for the Experimental Settings in Case 1 and Case 2}\label{Sec:settingCase12}

\paragraph{Case 1} In this setting, we have 120 clients in total, and each client has only one class of data. 

When $n_1:n_2=3:1$, there are 18 clients having the data from the 1st class, 18 clients having the data from the 2nd class, 18 clients having the data from the 3rd class, 18 clients having the data from the 4th class, and 18 clients having the data from the 5th class. There are 6 clients with data from the 6th class,  6 clients with data from the 7th class, 6 clients with data from the 8th class, 6 clients with data from the 9th class, and 6 clients the data from the 10th class.

When $n_1:n_2=5:1$, there are 20 clients having the data from the 1st class, 20 clients having the data from the 2nd class, 20 clients having the data from the 3rd class, 20 clients having the data from the 4th class and 20 clients having the data from the 5th class. There are 4 clients having the data from the 6th class,  4 clients having data from the 7th class, 4 clients having data from the 8th class, 4 clients having data from the 9th class, and 4 clients having the data from the 10th class.

Then we uniformly set 30$\%$ (36 clients) of them available. Since there are more clients which contain the data from the first 5 classes among the above 120 clients. The global dataset of these 36 clients is often class-imbalanced.

\paragraph{Case 2} In this setting, we have 200 clients in total and each client has only one class of data. For all the $i\in \{1,2,...,10\}$, there are 20 clients having the data from  the $i-$th class.

When $n_1:n_2=3:1$, we randomly pick 9 clients from the 20 clients which have the data from the 1st class and set them available. We randomly pick 9 clients from the 20 clients which have the data from the 2nd class and set them available. Similarly, for the $k$-th class ($2 < k \leq 5$), we randomly pick 9 clients from the 20 clients which have the data from the $k$-th class and set them available. On the contrary, we randomly pick 3 clients from the 20 clients which have the data from the 6th class and set them available. We randomly pick 3 clients from the 20 clients which have the data from the 7th class and set them available. Similarly, for $7 < k \leq 10$, we randomly pick 3 clients from the 20 clients which have the data from the $k$-th class and set them available. There are 60 clients in total.

When $n_1:n_2=5:1$, we randomly pick 10 clients from the 20 clients which have the data from the 1st class and set them available. We randomly pick 10 clients from the 20 clients which have the data from the 2nd class and set them available. For the $k$-tth class ($2 < k \leq 5)$, we randomly pick 10 clients from the 20 clients which have the data from the $k$-th class and set them available. On the contrary, we randomly pick 2 clients from the 20 clients which have the data from the 6th class and set them available. We randomly pick 2 clients from the 20 clients which have the data from the 7th class and set them available. And for the other $k$-th class ($7 < k \leq 10$), we randomly pick 2 clients from the 20 clients which have the data from the $k$-th class and set them available. There are 60 clients in total.

Since there are more clients that contain the data from the first 5 classes among the above 60 clients, the global dataset of these 60 clients is always class-imbalanced. 

The difference between the settings of Case 1 and Case 2 is that we uniformly set 30$\%$ clients available in Case 1 but non-uniformly set 30$\%$ clients available in Case 2. Nevertheless, the global datasets of the available clients are both class-imbalanced in both cases.

\subsection{The Averaged QCID Values for Case 1 and Case 2 in Section \ref{RCIGD}}\label{Sec:QCIDcase12}
\begin{table*}[!hbtp]
\centering
\renewcommand{\arraystretch}{0.9}
\setlength\tabcolsep{15pt}
\scalebox{1}{
\begin{tabular}{|c|c|c|c|c|c|c|}
\hline
\multicolumn{2}{|l|}{$\mathbb{E}[QCID](10^{-2})$}                                                                          & all            & rand        & pow-d            & Fed-cucb                     & Fed-CBS                          \\ \hline
\multirow{2}{*}{Case 1} & 3:1 & 2.90$\pm$0.02  & 9.33$\pm$0.17  & 13.70$\pm$0.39 & 1.39$\pm$0.37 & \textbf{0.57$\pm$0.04}                 \\ \cline{2-7} & 5:1 & 6.17$\pm$0.04  & 12.36$\pm$0.20 & 16.63$\pm$0.74 &   3.43$\pm$0.76              &        \textbf{2.41$\pm$0.07}          \\ \hline
\multirow{2}{*}{Case 2} & 3:1 & 2.50$\pm$0.00     & 9.91$\pm$0.16  & 13.68$\pm$0.72 & 1.89$\pm$1.72  & \textbf{0.001$\pm$0.001} \\ \cline{2-7} & 5:1 & 4.44$\pm$0.00     & 11.70$\pm$0.20 & 15.68$\pm$0.96 & 2.63$\pm$2.40  & \textbf{0.002$\pm$0.001} \\ \hline
\end{tabular}}
\caption{The averaged $QCID$ values for four baselines and our method. Our method, Fed-CBS, has successfully reduced the class-imbalance. Since the global dataset of all the 60 available clients is always class-imbalanced and the ratio is always fixed in case 2, the $QCID$ value is fixed and the derivation of it is always zero.}
\label{app:tableqcid}
\end{table*}

\subsection{Experiment Results of Fashion-MNIST Dataset}\label{Sec:resultfmnist}
\paragraph{Experiment Setup}  We adopt an MLP model with one hidden layer of 64 units and  and FedNova \citep{wang2020tackling} as the FL optimizer . Similar to the setup in the experiment of CIFAR-10, the batch size is $50$ for each client. In each communication round, all of them conduct the same number of local updates, which allows the client with the largest local dataset to conduct 5 local training epochs. In our method, we set the $\beta_m=m$, $\gamma=10$ and $L_{b}=10^{-20}$. The local optimizer is SGD with a weight decay of 0.0005. The learning rate is  0.01 initially and the decay factor is 0.9992. We terminate the FL training after 3000 communication rounds and then evaluate the model's performance on the test dataset of Fashion-MNIST.
\begin{table*}[!hbtp]
\centering
\renewcommand{\arraystretch}{1}
\setlength\tabcolsep{9pt}
\scalebox{1}{\begin{tabular}{|c|c|c|c|c|c|c|}
\hline
\multicolumn{2}{|l|}{}                                                                                  & all            & rand        & pow-d            & Fed-cucb      & Fed-CBS          \\ \hline
\multirow{3}{*}{Communication Rounds}                                                                  & $\alpha$=0.1 & 115$\pm$17   & 185$\pm$27 & 135$\pm$22    & 124$\pm$37  & \textbf{92$\pm$6}   \\ \cline{2-7} 
                                                                                         & $\alpha$=0.2 & 173$\pm$45     & 284$\pm$54   & 218$\pm$55    & 216$\pm$24          & \textbf{166$\pm$36}   \\ \cline{2-7} 
                                                                                         & $\alpha$=0.5 & 258$\pm$44     & 331$\pm$55   & 281$\pm$54   & 284$\pm$51           & \textbf{218$\pm$36}   \\ \hline
\multirow{3}{*}{$\mathbb{E}[QCID]( 10^{-2}$)} & $\alpha$=0.1 & 1.40$\pm$0.11 & 8.20$\pm$0.19  & 11.72$\pm$0.33 & 4.24$\pm$0.59  & \textbf{0.15$\pm$0.02} \\ \cline{2-7} 
                                                                                         & $\alpha$=0.2 & 1.39$\pm$0.22  & 7.67$\pm$0.26 & 10.31$\pm$0.24  & 4.43$\pm$0.38 & \textbf{0.21$\pm$0.01} \\ \cline{2-7} 
                                                                                         & $\alpha$=0.5 & 0.94$\pm$0.07  & 5.93$\pm$0.26 & 7.68$\pm$0.28  & 4.34$\pm$0.85 & \textbf{0.22$\pm$0.01} \\ \hline
\end{tabular}}
\caption{The communication rounds required for targeted test accuracy and the averaged QCID values on Fashion-MNIST dataset. The targeted test accuracy is $78\%$ for $\alpha=0.1$, $80\%$ for $\alpha=0.2$ and $82\%$ for $\alpha=0.5$. The results are the mean and the standard deviation over 4 different random seeds.}
\label{roundqcid1}
\end{table*}

\begin{figure*}[!hbtp]
  \centering
  \includegraphics[width=1\linewidth]{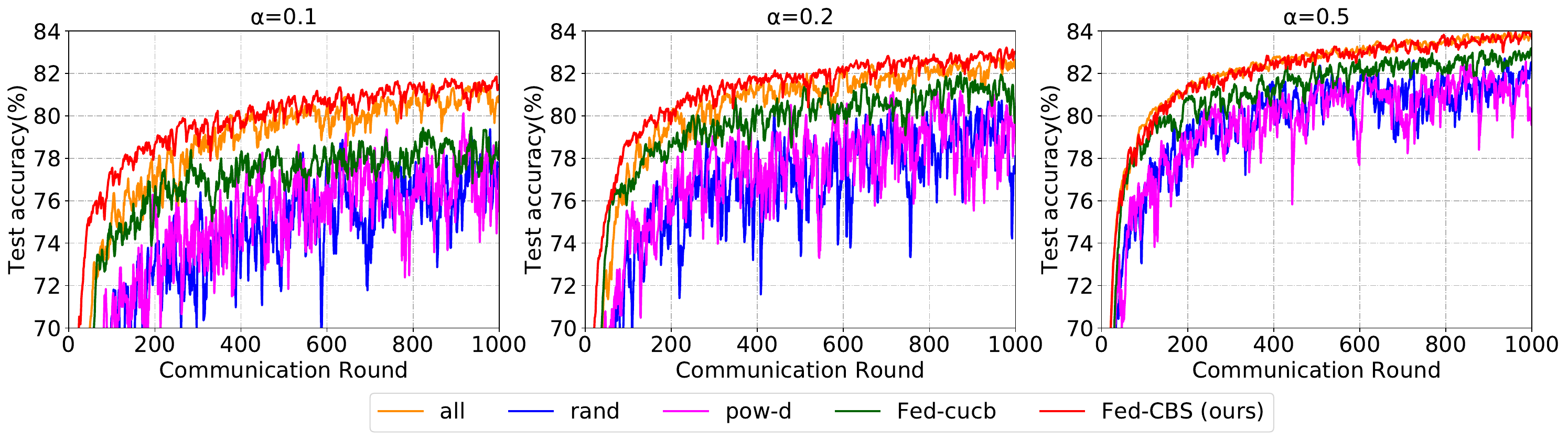}
\caption{Test accuracy on Fashion-MNIST dataset under three heterogeneous settings.}
\label{appfig:figfmnist1}
\end{figure*}
\subsubsection{Results for Class-Balanced Global Dataset}
Similar to the experiment settings, in this experiment, we set 200 clients in total with a class-balanced global dataset. The non-IID data partition among clients is based on the settings of Dirichlet distribution parameterized by the concentration parameter $\alpha$ in \cite{Hsu2019MeasuringTE}. In each communication round, we uniformly and randomly set 30$\%$ of them (i.e., 60 clients) available and select 10 clients from those 60 available ones to participate in the training. 

As shown in Table \ref{appfig:figfmnist1}, our method successfully reduces the class-imbalance, since it achieves the lowest $QCID$ value compared with other client selection strategies. Our method outperforms the other three baseline methods and achieves comparable performance in the ideal setting where all the available clients are engaged in the training. As shown in Table \ref{app:tableqcid} and Figure \ref{appfig:figfmnist1}, our method can achieve faster and more stable convergence. It is worth noting that due to the inaccurate estimation of distribution and the weakness of the greedy method discussed in Section \ref{sec:relatedwork}, the performance of Fed-cucb is much worse than ours.

\subsubsection{Results for Class-Imbalanced Global Dataset: Case 1}
Similar to the settings for Cifar-10, there are 120 clients in total and each client only has one class of data with the same quantity. The global dataset of these 120 clients is always class-imbalanced. To measure the degree of class imbalance, we let the global dataset have the same amount $n_1$ of data samples for five classes and have the same amount $n_2$ of data samples for the other five classes. The ratio $r$ between $n_1$ and $n_2$ is set to $3:1$ and $5:1$ respectively in the experiments. In each communication round, we randomly set 30$\%$ of them (\textit{i.e.}, 36 clients) available and select 10 clients to participate in the training.

As shown in the Table \ref{apptable:imbalancebest} and Figure \ref{appfig:case1}, our method can achieve faster and more stable convergence, and even better performance than the ideal setting where all the available clients are engaged. The performance of Fed-cucb \cite{yang2020federated} is better than the results on class-balanced global dataset, which is partly due to the simplicity of each client's local dataset composition in our experiments as discussed in the experiments of Cifar-10.

\subsubsection{Results for Class-Imbalanced Global Dataset: Case 2}

\begin{figure*}[ht]
\centering
\begin{subfigure}{.49\textwidth}
  \centering
  \includegraphics[width=\linewidth]{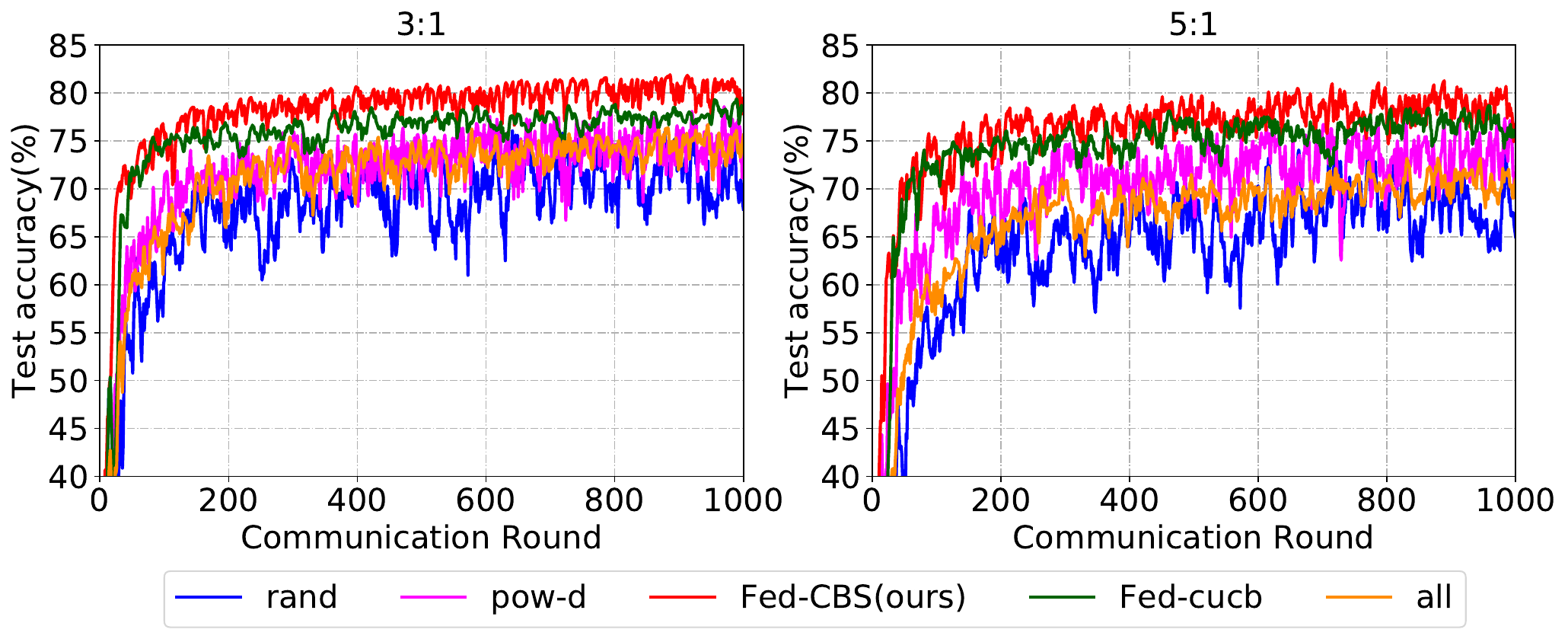}
  \caption{Case 1}
  \label{appfig:case1}
\end{subfigure}
\begin{subfigure}{.49\textwidth}
  \centering
  \includegraphics[width=\linewidth]{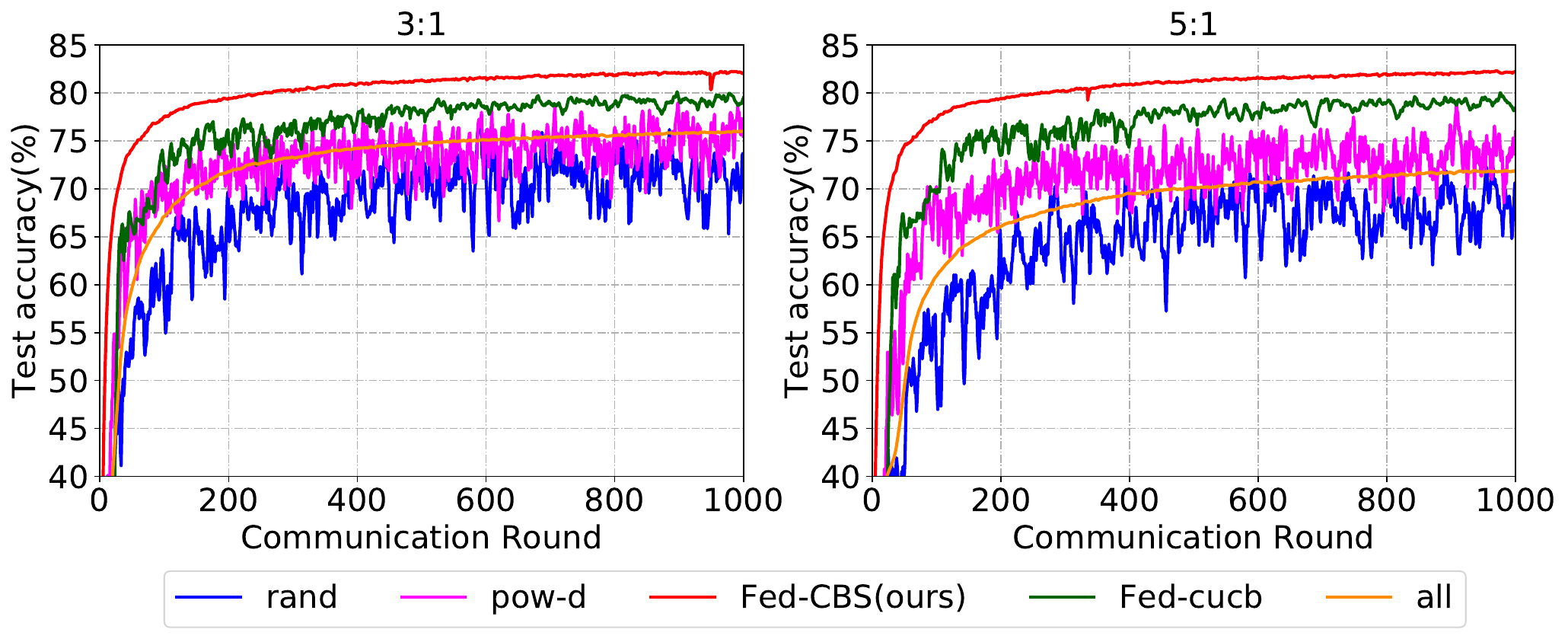}  
  \caption{Case 2}
  \label{appfig:case2}
\end{subfigure}
\caption{Test accuracy on Fashion-MINST with class-imbalanced global dataset in Case 1 and Case 2. }
\label{appfig:case12}
\end{figure*}

\begin{table*}[ht]
\centering
\renewcommand{\arraystretch}{1}
\setlength\tabcolsep{13pt}
\scalebox{1}{
\begin{tabular}{|c|c|c|c|c|c|c|}
\hline
\multicolumn{2}{|l|}{}                                                                          & all            & rand         & pow-d            & Fed-cucb                     & Fed-CBS                       \\ \hline
\multirow{3}{*}{Case 1}                                                            & 3:1 & 78.42$\pm$0.79 & 78.46$\pm$0.90 & 81.08$\pm$0.21 & 80.83$\pm$0.91                      &\textbf{81.75$\pm$0.34}          \\ \cline{2-7} 
                                                                                          & 5:1 & 72.42$\pm$2.22 & 75.49$\pm$2.56 & 80.15$\pm$0.41 &    80.50$\pm$0.95              & \textbf{81.42$\pm$0.50} \\ \hline
\multirow{2}{*}{Case 2}                                                            & 3:1 & 74.64$\pm$1.87 & 78.80$\pm$0.55 & 81.13$\pm$0.41 & 79.94$\pm$0.31 & \textbf{81.95$\pm$0.57}  \\ \cline{2-7} 
                                                                                          & 5:1 & 67.16$\pm$4.13 & 74.17$\pm$2.01 & 80.05$\pm$0.39 & 80.00$\pm$0.58 & \textbf{81.92$\pm$0.57}  \\ \hline
\end{tabular}}
\caption{Best test accuracy for our method and other four baselines on Fashion-MNIST dataset. }
\label{apptable:imbalancebest}
\end{table*}

Similar to the settings of Cifar-10, we assume that there are 200 clients in total. In each communication round, 30$\%$ of them (\textit{i.e.}, 60 clients) are set available in each training round. The global dataset of those 60 available clients is always class-imbalanced. To measure the degree of class imbalance, we make the global dataset have the same amount $n_1$ of data for the five classes and have the same amount $n_2$ of data for the other five classes. The ratio $r$ between $n_1$ and $n_2$ is set to $3:1$ and $5:1$. We select 10 clients from these 60 clients to participate in the training.

As shown in the Table \ref{apptable:imbalancebest} and Figure \ref{appfig:case2}, our method can achieve higher test accuracy and more stable convergence, which outperforms the ideal setting where all the available clients are engaged. Since the global dataset of the available 60 clients in each communication round is always class-imbalanced, the performance of engaging all of them is not good.

\section{Ablation Studies and Discussion}\label{Sec:ablation}

\subsection{Accurate Estimation vs Inaccurate Estimation for Fed-cucb}

\begin{figure*}[!hbtp]
  \centering
  \includegraphics[width=1\linewidth]{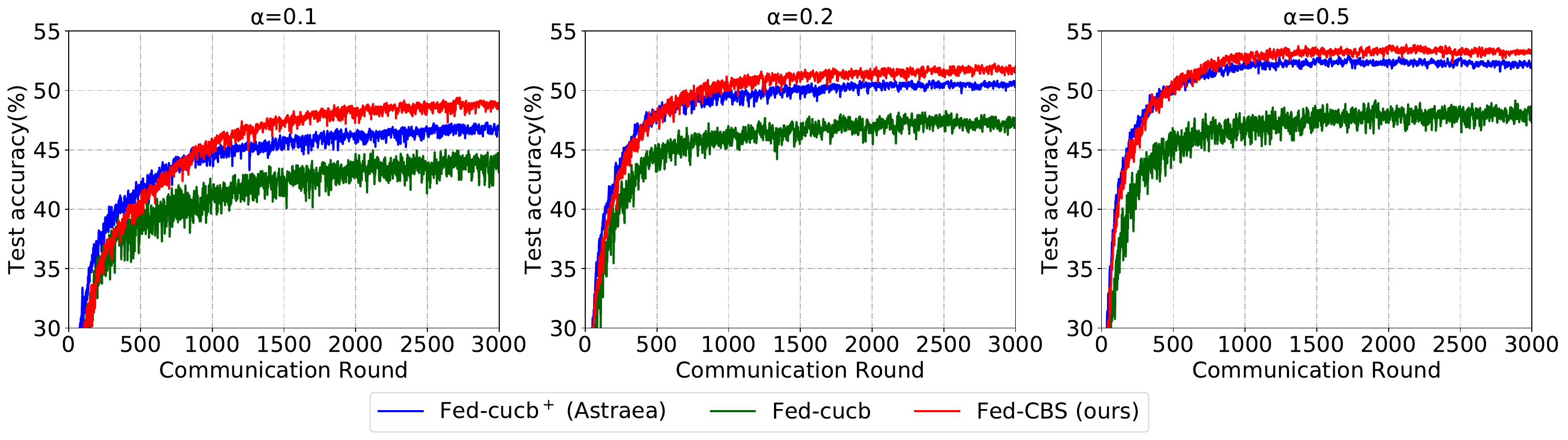}
\caption{Test accuracy on Cifar-10 for Fed-cucb, Fed-cucb$^+$ and Fed-CBS.}
\label{appfig:ablationfig1}
\end{figure*}

\begin{table*}[!hbtp]
\centering\renewcommand{\arraystretch}{1}
\setlength\tabcolsep{18pt}
\scalebox{1}{
\begin{tabular}{|c|c|c|c|c|}
\hline
\multicolumn{2}{|l|}{}                                                                                  & Fed-cucb$^+$ (Astraea) & Fed-cucb       & Fed-CBS           \\ \hline
\multirow{2}{*}{Best Accuracy ($\%$)}                                                           & $\alpha$=0.1 & 49.10$\pm$0.70 & 46.84$\pm$0.73 & \textbf{50.36$\pm$0.58} \\ \cline{2-5} 
                                                                                         & $\alpha$=0.2 & 50.61$\pm$0.77 & 48.80$\pm$1.05 & \textbf{51.95$\pm$0.57} \\ \cline{2-5} 
                                                                                         & $\alpha$=0.5 & 52.71$\pm$0.27 & 50.98$\pm$0.56 & \textbf{54.21$\pm$0.34} \\ \hline
\multirow{3}{*}{$\mathbb{E}(QCID)$ ({$10^{-2}$})} & $\alpha$=0.1 &0.83$\pm$0.18 &  7.09$\pm$2.27  & \textbf{0.62$\pm$0.20}  \\ \cline{2-5} 
                                                                                         & $\alpha$=0.2 & 0.68$\pm$0.05  &  5.93$\pm$1.01 & \textbf{0.51$\pm$0.12}  \\ \cline{2-5} 
                                                                                         & $\alpha$=0.5 & 0.43$\pm$0.04  & 6.47$\pm$0.77  & \textbf{0.36$\pm$0.04}  \\ \hline
\end{tabular}}
\caption{Best accuracy and the averaged $QCID$ values.}
\label{apptable:ablationtable1}
\end{table*}

As discussed in Sections \ref{sec:relatedwork} and \ref{sec:RCBGD}, the estimation of the label distribution in Fed-cucb \cite{yang2020federated} is not accurate, which leads to performance degradation. Hence there comes a natural question, would the performance of Fed-cucb get improved if it got an exact estimation of the local label distribution? In our simulation, we manually let the Fed-cucb know the exact value of each client's local label distribution and name it as Fed-cucb$^+$. Actually, Fed-cucb$^+$ is the core part of Astraea \cite{Duan2019AstraeaSF} without data augmentation. Hence, comparing our method with Fed-cucb$^+$ can show the superiority of our sampling strategy over the greedy method in Fed-cucb \cite{yang2020federated} and Astraea \cite{Duan2019AstraeaSF}.
\begin{figure*}[!hbtp]
  \centering
  \includegraphics[width=0.8\linewidth]{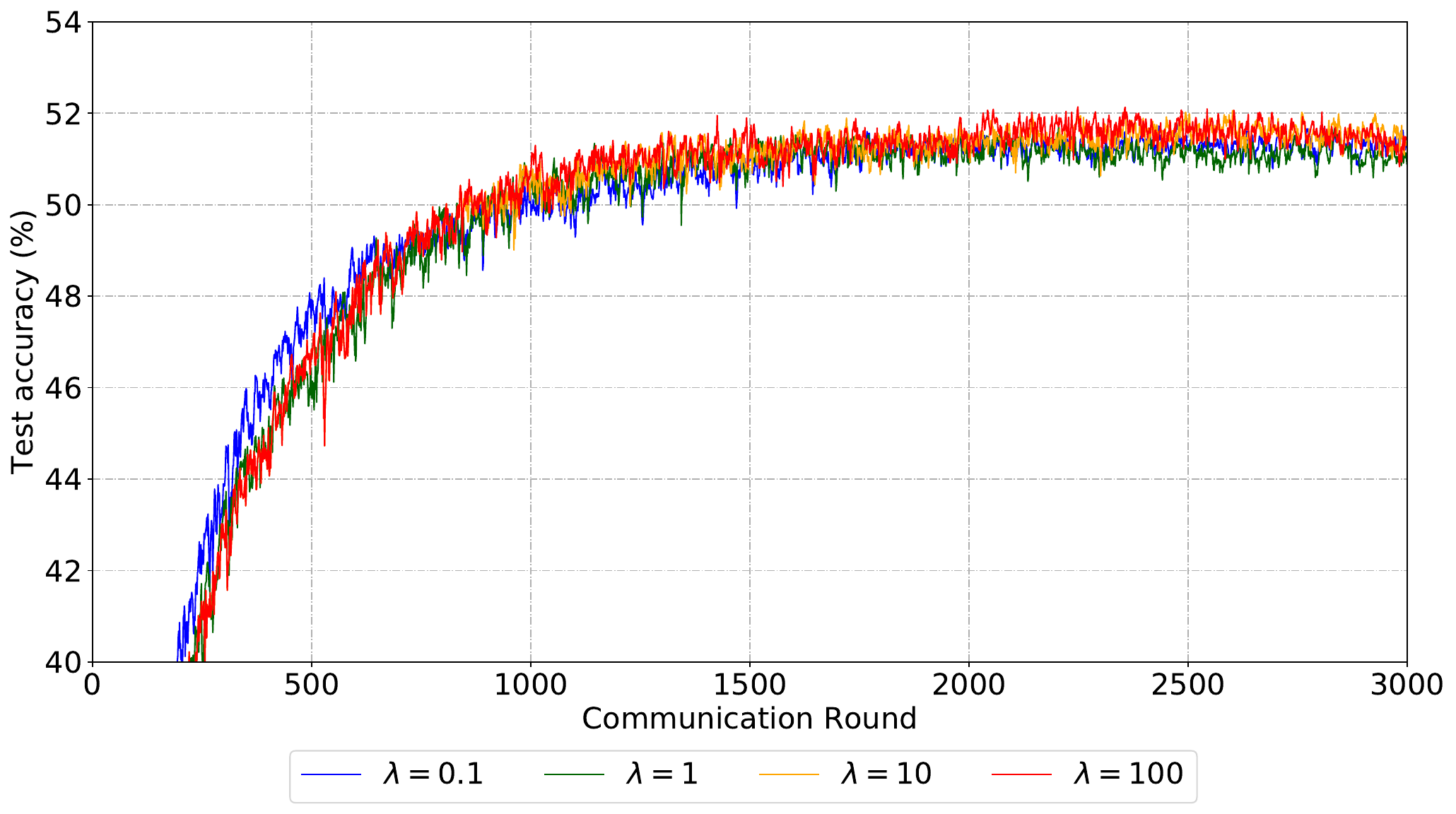}
\caption{Test accuracy with different exploration factor $\lambda$. }
\label{appfig:exploration}
\end{figure*}
\subsection{The Effect of Exploration Factor $\lambda$}

As shown in the Figure \ref{appfig:ablationfig1} and Table \ref{apptable:ablationtable1}, Fed-cucb$^+$ does improve the performance of Fed-cucb, which verifies the importance of accurate estimation. However, our Fed-CBS still outperforms Fed-cucb$^+$. Although, it seems that the accuracy of Fed-cucb$^+$ increases a little faster than Fed-CBS at the beginning of the training, our method will achieve higher accuracy as the training proceeds further. As discussed in the Remark \ref{remark1} in Section \ref{sec:samplingSt} and the Figure \ref{fig:greedyandours} of Section \ref{sec:RCBGD}, this is due to the pitfall of greedy method, where one will miss the optimal solution. This has been verified by the averaged $QCID$ value in Table \ref{apptable:ablationtable1}, which shows that Fed-CBS can achieve lower $\mathbb{E}(QCID)$ than Fed-cucb$^+$ (Astraea).

Another potential weakness of greedy method is the diversity of client composition. Following their selection process, once the first choice of client has been made, the following choices are fixed successively. Hence there are only limited kinds of client composition. It is interesting to investigate the relationship between the training performance and the diversity of client composition and we leave it as future work.

In our sampling strategy, when we sample the first client, we introduce the exploration factor $\lambda$ to balance the tradeoff between exploitation and exploration. When the $\lambda$ is small, our method will tend to exploit the class-balanced clients since their $QCID$ values are smaller. For fairness, we hope every client can get the chance to be selected. Hence, we can increase the $\lambda$ and then our method will tend to explore the clients which have seldom been selected before. However, it might cost many communication rounds for exploration and lead to slower convergence. 

We conduct some experiments to verify the effect of exploration factor $\lambda$. The settings are the same as the ones in Section \ref{sec:RCBGD} when $\alpha=0.2$. As shown in the Figure \ref{appfig:exploration}, as the $\lambda$ becomes larger, the increase of accuracy will become a little slower at the start of the training. This because the it might cost more communication rounds for exploration. As the training proceeds, the  accuracy with larger $\lambda$ becomes a little higher than the ones with smaller $\lambda$. Overall, the improvement on the convergence speed and best accuracy is very slight, which means the performance of FL training is not very sensitive to the values of exploration factor $\lambda$. Generally, if we want to slightly fasten the convergence, we can decrease the value of $\lambda$. If we want to improve the best accuracy a little, we can increase the value of $\lambda$.

\subsection{The Performance with Different Amounts of Selected Clients}
In this section, we want to investigate how the amount of selected clients will affect the FL training performance. Generally, we think as the amount of selected clients increases, the FL training process can achieve better performance. However, once that amount reaches some threshold $\epsilon$, the improvement will become slighter. This is because we find that select only a subset of all the available can achieve comparable results with engaging all the available clients into the training. As for how to decide the threshold $\epsilon$, we provide the following two principles based on $QCID$ and our experience.

\begin{itemize}
    \item First, if we work on a classification task with $B$ classes, we can select at least B clients. This is because in some special cases, each client will only have one class of data in their local dataset, such as the settings in Section \ref{RCIGD}. Hence, if less than B clients are selected, the grouped dataset of the selected clients will miss some classes of data.
    \item Second, to avoid missing some classes of data, we increase the threshold $\epsilon$ such that the averaged $QCID$ value could be smaller than $\frac{1}{B^2}$. This is because if the grouped dataset misses at least one class of data, the $QCID$ will be larger than $\frac{1}{B^2}$.
\end{itemize}

We conducted some experiments to verify our prediction on the effect of the amount of selected clients. The settings are the same as the ones in Section \ref{sec:RCBGD} when $\alpha=0.2$. As shown in the left figure of Figure \ref{appfig:amount}, as the amount of selected clients increases, the FL training process can achieve better performance. However, when the amount M is larger than 10, the improvement is slighter. In the right figure of Figure \ref{appfig:amount} , we can find that the averaged $QCID$ value of selecting 5 clients is larger than $(\frac{1}{10})^2=0.01$ and its performance is obviously worse than the others. These results verify the effectiveness of our principles on how to set the threshold $\epsilon$. It is worth noting that due to the limitation of communication capacities, we cannot select as many clients as possible. Hence, how to identify the appropriate threshold $\epsilon$ is critical to the FL training. 
\begin{figure*}[!hbtp]
  \centering
  \includegraphics[width=0.8\linewidth]{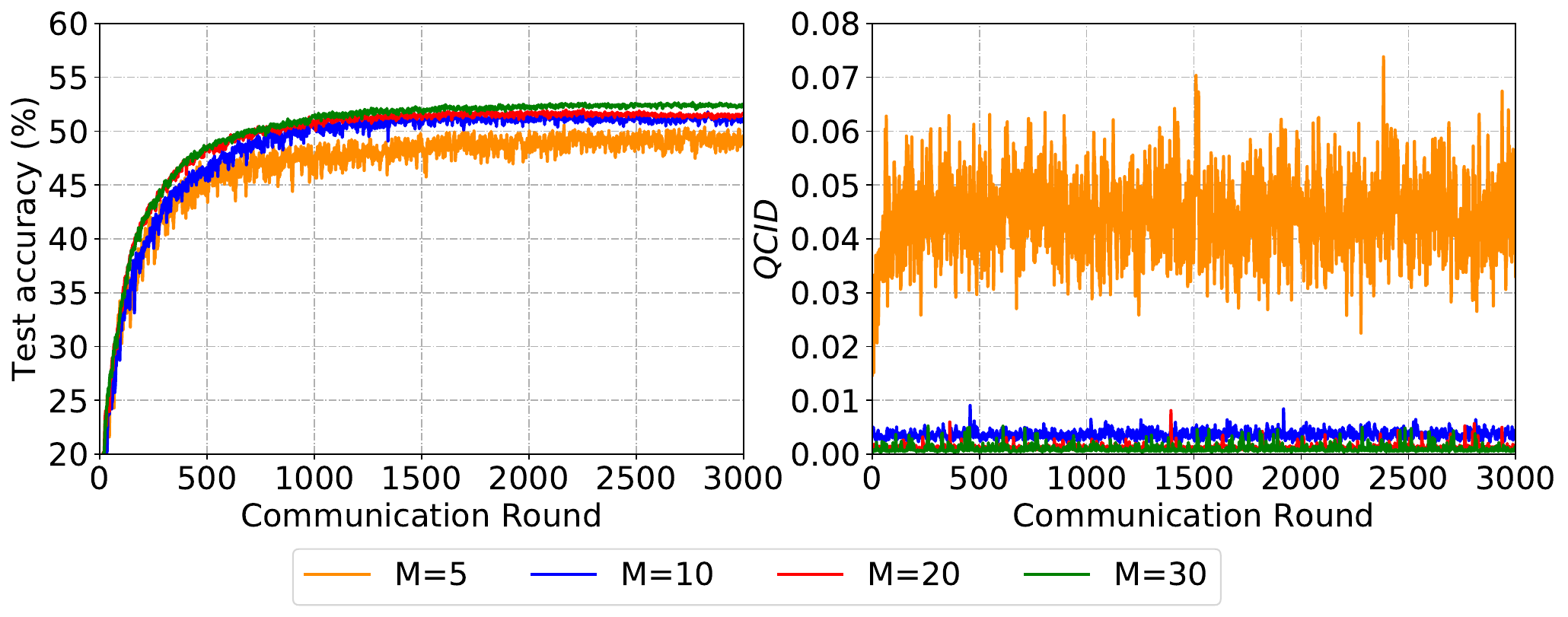}
\caption{Left: The performance with different amounts of selected clients. Right: The $QCID$ with different amounts of selected clients.}
\label{appfig:amount}
\end{figure*}

\subsection{Additional Experimental Results on FEMNIST Dataset}
We also conduct some experiments on the FEMINST Dataset to simulate more realistic settings where there are thousands of clients. Since in practice, it is impossible to engage all the clients during training, we compare our method by randomly selecting more clients. . There are 3500 ($>$ 1000) clients in total and we randomly set $10\% (< 30\%)$ of them available in each round. Then our method tries to select 30 clients from them. That is less than 1\% of all the 3550 clients and also less than the number of classes (64). Besides, we also run three baselines, randomly selecting 30 clients, randomly selecting $120(>100)$ clients, and selecting 30 clients with fed-cucb. We present the results in Table \ref{tablefemnist} and Figure \ref{figurefemnist}. Our performance is still the best. Due to the global imbalance, the rand-120 is even worse than the rand-30.

\begin{table}[ht]
\setlength\tabcolsep{6pt}
\centering
\begin{tabular}{|l|l|l|l|l|}
\hline
       & rand-30       & rand-120      & Fed-cucb     & Fed-CBS    \\ \hline
Communication Rounds & 1106 $\pm$ 24 & 1394 $\pm$ 11 & 1124$\pm$ 31 & \textbf{980$\pm$17} \\ \hline
\end{tabular}
\vspace{10pt}
\caption{The communication rounds required for targeted test accuracy (75\%). The results are the mean and the standard deviation over 3 different random seeds.}
\label{tablefemnist}
\end{table}

\begin{figure}[ht]
  \centering
  \includegraphics[width=0.95\linewidth]{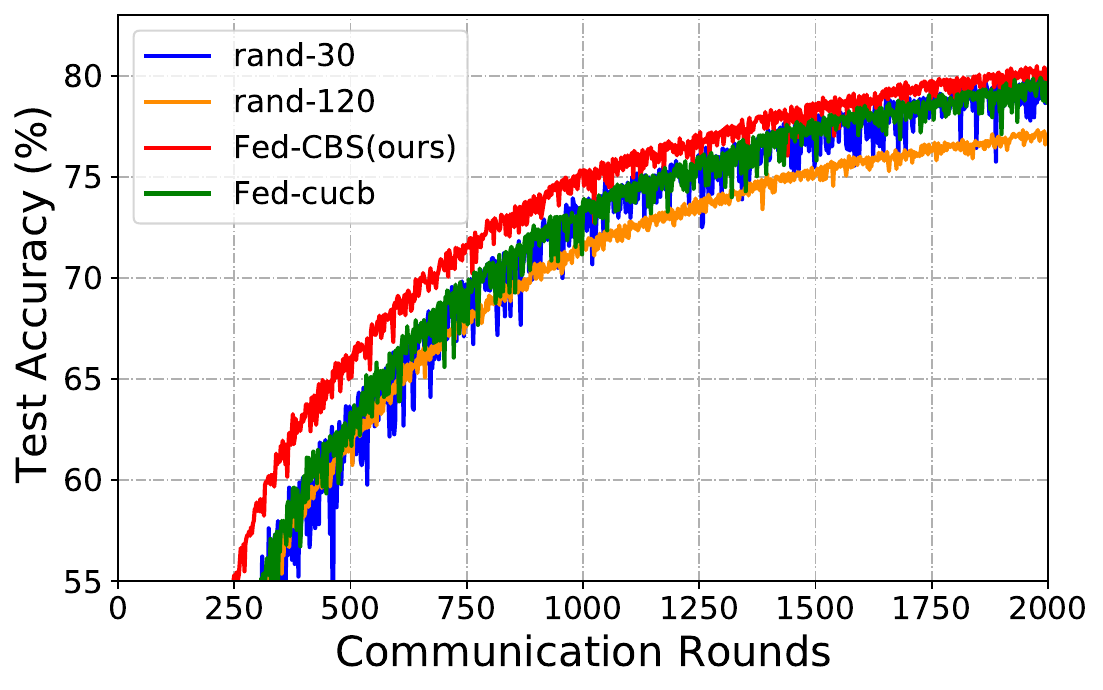}\\
\caption{Test accuracy for FEMNIST}\vspace{-10pt}
\label{figurefemnist}
\end{figure}


\section{Comparison between Cluster-based Client Sampling Algorithms and Fed-CBS}
We present the following comparison between cluster-based client sampling algorithms and our own method to demonstrate our superiority.

Firstly, the unbiased sampling property of the clustering sampling method \cite{fraboni2021clustered} may not lead to optimal performance when dealing with class-imbalanced global training datasets. In Section 3.1 of \cite{fraboni2021clustered}, the authors mention that they “require clustered sampling to be unbiased,” which implies that the expected value of client aggregation should be equivalent to the aggregation of all clients. However, our findings, as depicted in Figures \ref{fig1}, indicate that aggregating all clients does not always lead to satisfactory performance, especially when the downstream test task is class-balanced. It should be noted that ensuring class-balance in the downstream test task is crucial for maintaining fairness and privacy. This is because the imbalanced performance of the model across different classes can potentially reveal sensitive information about the global training dataset.

Secondly, our method guides the clustering sampling methods. Although clustering sampling can address many root causes of heterogeneity in the input space distributions at clients, however, since “unbiased sampling” will cause the mismatch between the input space distributions at clients and the downstream task's space distribution, we still need to identify key causes to make the clustering sampling “biased” to align the input space and downstream space. This is still very challenging because while clustering sampling methods can include many root causes of heterogeneity in the input space distributions at clients, we still need to be careful since most are hard to measure and contain lots of private information. Our analysis of "class imbalance" provides a valuable measure in this regard, and we also offer an efficient means of utilizing this measure in a privacy-preserving way. Therefore, our work can contribute to advancing clustering sampling methods in the future

\end{document}